\documentclass{article}

\usepackage{arxiv}

\usepackage[utf8]{inputenc}
\usepackage[T1]{fontenc}
\usepackage{hyperref}
\usepackage{url}
\usepackage{booktabs}
\usepackage{amsfonts}
\usepackage{amsmath}
\usepackage{amssymb}
\usepackage{amsthm}
\usepackage{nicefrac}
\usepackage{microtype}
\usepackage{graphicx}
\usepackage{enumitem}
\usepackage{natbib}
\usepackage{doi}
\usepackage{xcolor}
\usepackage{tikz}
\usetikzlibrary{arrows.meta, positioning, calc}

\graphicspath{ {./figures/} }

\definecolor{cyanAccent}   {HTML}{0FA6C4}
\definecolor{magentaAccent}{HTML}{D62E7E}
\definecolor{inkMuted}     {HTML}{4A4A55}
\definecolor{panelFill}    {HTML}{EEF1F6}

\newtheorem{theorem}{Theorem}
\newtheorem{proposition}{Proposition}
\newtheorem{corollary}{Corollary}
\theoremstyle{definition}
\newtheorem{definition}{Definition}

\title{Frame-Synchronous Hand Gesture Detection by Projected Winding Order}

\author{
	\href{https://orcid.org/0000-0001-5644-1575}{\includegraphics[scale=0.06]{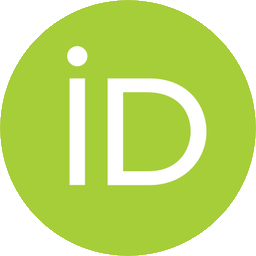}\hspace{1mm}Amey Thakur} \\
	Independent Researcher \\
	Toronto, Canada \\
	\texttt{ameythakur20@gmail.com} \\
}

\renewcommand{\undertitle}{}
\renewcommand{\headeright}{}
\renewcommand{\shorttitle}{Frame-Synchronous Gesture Detection by Projected Winding Order}

\hypersetup{
	pdftitle={Frame-Synchronous Hand Gesture Detection by Projected Winding Order},
	pdfsubject={cs.CV},
	pdfauthor={Amey Thakur},
	pdfkeywords={Gesture Recognition, Hand Tracking, Projective Geometry, Real-Time Video, Human Computer Interaction, On-Device Inference},
}

\date{1 September 2026}

\begin{document}
\maketitle

\begin{abstract}
Gesture recognition on video is normally posed as classification: assign a label
to each frame, then act on the label. That formulation is adequate for control,
where a command may be obeyed several frames late without a user noticing, and
inadequate for \emph{synchronisation}, where an output must be aligned to the
frame on which the gesture physically occurred. We take the synchronisation
problem for a specific and common movement, the rotation of an open hand about
its own long axis, and show that it admits an exact solution requiring no
classifier, no training data and no calibration.

Let $s$ denote the normalised two-dimensional cross product of the two palm
edges, taken at the wrist and the two outer knuckles, under the projection the
camera already performs. We prove that $s$ factorises as $k(\theta)\cos\theta$
with $|k(\theta)| > 0$ everywhere, so that $s$ vanishes if and only if the palm
is edge-on and its sign tracks the face presented to the camera. Detecting the
gesture therefore reduces to locating a zero crossing of a single scalar, which
yields an instant rather than an interval. We further prove that the criterion
is invariant to image mirroring, to hand scale and to handedness, and that these
follow from the algebraic form rather than from any property of the estimator
supplying the landmarks.

A second, two-handed interaction uses four fingertips as the corners of a
window onto a restyled version of the same scene. We give the coverage predicate
this requires, showing that the triangulation ordinarily used is unsound the
moment the hands cross and that an even-odd test is both correct there and
cheaper, and we derive the three operators that fill the window: an iterated
edge-preserving filter whose range width must vary across iterations for a reason
we quantify, a quantiser whose amplification of residual noise we bound, and a
difference of Gaussians whose noise response we compute in closed form and use to
fix its one free mixing parameter.

We embed both in a browser system in which landmark inference is rate limited
well below the display rate and each recognised gesture carries the timestamp of
its cause, so that presentation quality is independent of inference throughput.
The system composites effects onto the recorded surface rather than the camera
stream, so no editing stage is required. In its default configuration it runs
entirely on the client, with no server, no key and no per-use cost; two optional
paths depart from this, are disabled until enabled by the user, and are accounted
for rather than elided.

Against a corpus with exact ground truth the criterion detects $95\%$ of flips
with no false positive in $240$ near-miss sequences, and places each detection
within $6.7$\,ms on average of the instant it occurred: a sixth of the interval
at which the hand is observed. Reporting an event more finely than one samples is
the practical consequence of treating a gesture as the zero of a continuous
quantity rather than as a label attached to a frame, and it is unavailable to a
classifier operating on the same stream. We state plainly what remains
unverified.
\end{abstract}

\keywords{Gesture Recognition \and Hand Tracking \and Projective Geometry \and
Real-Time Video \and Human Computer Interaction \and On-Device Inference}

\section{Introduction}

A gesture recogniser that drives an interface is judged by whether it eventually
fires. A gesture recogniser that drives a \emph{visual effect inside a recording}
is judged by \emph{when} it fires, because the viewer sees the gesture and its
consequence in the same footage and will attribute one to the other only if they
coincide. An effect placed two hundred milliseconds after a hand movement does
not read as caused by it; it reads as a coincidence.

This distinction is not usually drawn. The dominant formulation labels frames and
acts on labels, which returns an interval during which a gesture was judged to be
in progress. Recovering a single instant from such an interval is not well posed:
the interval's boundaries are artefacts of the classifier's confidence profile,
and that profile is at its worst exactly where the instant lies. During the fast
middle of a hand rotation the hand is motion blurred and self-occluding, so a
classifier is least certain precisely when certainty is required.

We show that for one important gesture the instant can be obtained directly, and
exactly, from projective geometry.

\paragraph{Contributions.}
\begin{enumerate}[leftmargin=*, topsep=2pt, itemsep=2pt]
  \item A criterion for detecting the rotation of an open hand about its long
        axis as the zero crossing of a single scalar computed from three
        landmarks, with a proof that the crossing coincides exactly with the
        edge-on configuration (Theorem~\ref{thm:crossing}).
  \item Proofs that the criterion is invariant under image mirroring, under
        uniform scaling of the hand, and under exchange of handedness
        (Propositions~\ref{prop:mirror}--\ref{prop:chirality}), all following
        from its algebraic form rather than from the landmark estimator.
  \item A coverage predicate for the two-hand window that remains correct when
        the quadrilateral self-intersects, with the exact area by which the
        triangulation it replaces overdraws in that case
        (Proposition~\ref{prop:coverage}, Corollary~\ref{cor:coverage}).
  \item An analytic realisation of the three-representation cartoon
        decomposition, in which each of the three free parameters is fixed by a
        derived quantity rather than by inspection: the range schedule by the
        expected attenuation under sensor noise
        (Proposition~\ref{prop:range}), the band-selection neighbourhood by the
        quantiser's slope~\eqref{eq:slope}, and the contour operator's mixing
        fraction by its closed-form noise response
        (Proposition~\ref{prop:dognoise}).
  \item Sub-frame localisation of the event by interpolating the zero between
        the two samples that bracket it, with a proof that the error is second
        order in the sampling interval where reporting either sample is first
        order (Proposition~\ref{prop:interpolation}), and a measurement showing
        the mean absolute error falling from $23.0$\,ms to $6.7$\,ms.
  \item A generated corpus with exact ground truth, and the argument for
        generating rather than filming it, which is that the crossing frame is
        not observable in video to a precision finer than the quantity being
        measured (Section~\ref{sec:evaluation}).
  \item A rate-decoupled compositing architecture in which a trigger carries the
        timestamp of its cause, making the smoothness of the response
        independent of inference throughput (Section~\ref{sec:decoupled}).
  \item A bounded temporal buffer permitting retrospective compositing, which
        obtains an effect otherwise requiring generative synthesis from
        information the stream has already delivered
        (Section~\ref{sec:rewind}).
  \item A statement of the condition under which a window tracked in one clip
        may be composited over a generatively restyled version of it, and of why
        that condition can be requested but not enforced
        (Proposition~\ref{prop:alignment}).
  \item A complete client-side implementation, released under the MIT licence,
        with the components that were verified and those that were not reported
        separately (Sections~\ref{sec:implementation}--\ref{sec:limitations}).
\end{enumerate}

\section{Related Work}

\subsection{Hand pose estimation}

Recovering hand keypoints from a single view was for a long time limited by the
cost of annotation, since a hand is small, frequently self-occluded, and tedious
to label. \citet{simon2017handkeypoint} broke the dependency by bootstrapping
annotations across a multi-camera rig, using detections that succeed in one view
to supervise the views in which they fail. The resulting detectors were
integrated into the multi-person pose framework of \citet{cao2018openpose},
which established the part-affinity formulation for associating keypoints into
skeletons.

Those systems target accuracy on server hardware. The mobile line of work
instead fixes a latency budget and designs within it. \citet{bazarevsky2019blazeface}
introduced a detector architecture tuned for mobile GPUs, extended to the body
in \citet{bazarevsky2020blazepose} and to the hand in
\citet{zhang2020mediapipehands}, the last of which supplies the two-stage design
we build on: a palm detector run intermittently, followed by a landmark
regressor confined to the detected region, assembled in the pipeline framework
of \citet{lugaresi2019mediapipe}.

We take such an estimator as given and contribute downstream of it. This matters
for the scope of our results: the invariances in Section~\ref{sec:invariance}
are properties of the projection, not of any network, and therefore hold for any
estimator whose output is a projection of the hand.

\subsection{Classical hand analysis}

Before learned estimators, hand gestures were recognised by pipelines assembled
from the primitives collected in OpenCV \citep{bradski2000opencv}: skin-tone
segmentation, contour extraction, convex hulls, and convexity defects to count
extended fingers. These are sensitive to illumination, background and skin tone,
which is precisely what learned landmark estimation removed.

It is worth being clear that the quantity we use is of the same kind as the ones
those pipelines computed. A signed area over three points is a classical
primitive, and had reliable landmark correspondences been available it could
have been evaluated then. The contribution here is not the primitive but the
observation that this particular one resolves the synchronisation problem
exactly. The containment test of Section~\ref{sec:coverage} belongs to the same
family as the one a quadtree applies at every node, deciding which axis-aligned
region a point falls in; \citet{thakur2022quadtree} builds that index for the
interactive display of large point sets, where the cost of repeating the test is
what bounds the frame rate. The problem here is the same in kind and smaller by
orders of magnitude: four edges, once per fragment.

\subsection{Gesture recognition from landmarks}

Given landmarks, two families dominate. Learned classifiers map the landmark
vector, or a window of them, to a gesture label; the real-time architecture of
\citet{kopuklu2019realtime} is representative, pairing a lightweight detector
with a deeper classifier so that the expensive model runs only when a gesture is
plausibly in progress. Rule-based recognisers instead threshold derived
quantities such as joint angles and finger extension.
The two are ends of a spectrum rather than alternatives: a neuro-fuzzy system
keeps the rules and learns their parameters instead of fixing them by hand, which
is the middle ground surveyed by \citet{thakur2021neurofuzzy}. The learned end
produces its per-frame label from a parameterised network trained by gradient
descent, and the models and learning rules that make one, from the limitations of
the perceptron to the rectified unit that answers them, are set out in
\citet{thakur2021neuralnetworks}.

Both families return a per-frame decision, and therefore an interval once
aggregated over time. Our criterion is rule-based in implementation but is not a
tuned rule: it is a closed-form consequence of the projection, and the
thresholds around it gate evidence quality rather than define the gesture. That
distinction is what allows a claim of exactness in
Theorem~\ref{thm:crossing} that a tuned rule could not support.

\subsection{Locating a gesture in time}

Two literatures address directly the question of where a gesture sits in a
stream. Gesture spotting separates gestures from the movements between them in a
continuous signal; the threshold model of \citet{lee1999threshold} is the
canonical treatment, adding a garbage state against which candidate gestures
compete so that a segmentation falls out of the recognition rather than being
imposed before it. Temporal action localisation asks the same question of
untrimmed video and answers it with proposals scored and refined by a network, in
the multi-stage form of \citet{shou2016temporal}.

Both return a segment. That is the right output for their purposes, and it is one
step short of what synchronisation needs: a segment has two boundaries and the
instant we require is interior to it, so recovering the instant means positing a
further rule over the segment. The boundaries are also where these methods are
least certain, since they are defined by the decay of a score rather than by any
event. The criterion here differs in kind rather than in accuracy. It does not
locate a segment and then reduce it; the quantity it evaluates has a zero, the
zero is the event, and there is nothing left to reduce.

\subsection{Optical flow as a temporal cue}

When the question is when something moved rather than what shape it took, dense
optical flow is the usual instrument, most often in the polynomial-expansion
formulation of \citet{farneback2003motion}. It is the cue used for temporal
localisation in the modular surveillance pipeline of
\citet{thakur2026accident}, where the event to be located is a collision and
flow magnitude peaks sharply at it.

Flow is poorly matched to the present problem. The gesture here is a rigid
rotation passing through a particular orientation, and flow magnitude is
elevated across the whole fast portion of that rotation rather than at its
midpoint. Using it would reproduce the interval problem rather than resolve it.
The distinction is between locating motion and locating a configuration, and
only the latter admits the exact treatment of Section~\ref{sec:invariance}.

\subsection{Filtering noisy interactive input}

Landmark estimates jitter by a small but visible fraction of the frame even when
the hand is stationary, and any geometry built from them inherits that jitter. A
fixed low-pass filter trades jitter against lag and cannot serve both a
stationary and a moving hand. \citet{casiez2012euro} resolve this with a filter
whose cutoff rises with the estimated speed of the signal, so that a slow signal
is smoothed heavily and a fast one is followed closely.

The corner filter in Section~\ref{sec:frame} is of this family. We depart from
the original in one respect that matters for our architecture: because inference
runs at a rate well below the display rate and that rate varies with load, we
rescale the smoothing coefficient to the elapsed interval, so the filter's
behaviour in wall-clock time is invariant to the inference rate.

\subsection{Appearance transformation of video}

Restyling a subject is well studied as a learned image-to-image problem. The
adversarial formulation it usually takes, a generator trained against a
discriminator until its output is not separable from the target distribution,
and the variants that formulation has since acquired, are surveyed by
\citet{thakur2021gans}. Two instances of it matter here. White-box
cartoonisation \citep{thakur2021cartoonization} learns an explicitly decomposed
representation, a surface carrying flat regions, a structure carrying segmented
colour and a texture carrying contours, rather than an opaque mapping; it is that
decomposition, and not its training, that Section~\ref{sec:stylise} reproduces
analytically. Open-domain sketch-to-photo synthesis \citep{thakur2021aoda} must
produce photographic content for sketch classes absent from its training set,
which is the sharpest case of the property at issue. Such methods synthesise
content absent from the input, which is exactly what allows them to replace a
subject and exactly what makes them unsuitable under a constraint of zero
marginal cost and bounded latency: a generative model of useful quality is either
hosted, and therefore metered, or too slow for interactive compositing.

We use an analytic shader instead, and are explicit in
Section~\ref{sec:limitations} that this restyles the subject present in the
frame rather than replacing it. The distinction is stated because it is the one
a reader is most likely to assume away.

\subsection{Browser implementations}

Several browser demonstrations composite effects over hand-tracked geometry, and
the two-hand framing gesture in particular has an established following. In the
survey we conducted before implementation, every such system terminated at a
live preview: none encoded the composited output, and none stated a detection
criterion that could be evaluated or reproduced. Where the interior of such a
frame is restyled, video is routed to a hosted generative model, which
reintroduces a server, a key and a per-use cost, and shifts the latency from
frames to seconds or minutes.

\section{Problem Formulation}

Let $\mathcal{V} = (I_1, I_2, \dots)$ be a video stream with frame $I_t$
observed at time $t$. Let $\Lambda$ be a landmark estimator producing, for each
observed frame, a set of hand landmark configurations in normalised image
coordinates.

\begin{definition}[Synchronisation problem]
Given a gesture $G$ occurring physically over the interval $[t_a, t_b]$ with a
distinguished instant $t^\star \in [t_a, t_b]$, produce an estimate
$\hat{t}^\star$ and a composited stream $\mathcal{V}'$ in which a designated
transformation is applied from $\hat{t}^\star$ onward, such that
$|\hat{t}^\star - t^\star|$ is below perceptual tolerance and $\mathcal{V}'$ is
encoded without a subsequent editing stage.
\end{definition}

Three constraints distinguish this from classification.

\textbf{The instant is the deliverable.} A classifier estimates the support
$[t_a, t_b]$. Reducing that support to a point requires a rule that the
classifier does not supply, and any such rule is sensitive to the confidence
profile at the boundaries.

\textbf{Evidence is worst where it is needed.} $t^\star$ lies in the interior of
the movement, where angular velocity is highest, where motion blur is greatest,
and where self-occlusion is most severe. Landmark confidence is therefore
minimised at the instant to be recovered.

\textbf{Inference and presentation compete.} Let $C_\Lambda$ be the per-frame
cost of landmark estimation and $C_R$ that of rendering. Naively evaluating both
at the display rate $f_d$ requires $f_d(C_\Lambda + C_R)$, and since
$C_\Lambda \gg C_R$ on commodity mobile hardware this caps the system at the
inference rate. Section~\ref{sec:decoupled} removes the coupling.

\section{Method}

\subsection{The projected palm winding}

Let $P_0, P_5, P_{17} \in \mathbb{R}^3$ denote the wrist, the index knuckle and
the little finger knuckle in a hand-fixed frame, with the $y$ axis along the
hand's long axis and the $z$ axis out of the palm. These three points span the
palm plane and are, among the $21$ landmarks, the three least affected by finger
articulation.

Let $R_\theta$ be rotation about the long axis by $\theta$ and let
$\pi(x, y, z) = (x, y)$ be orthographic projection. Write the projected edges as
\begin{equation}
  v_1(\theta) = \pi\big(R_\theta P_5\big) - \pi\big(R_\theta P_0\big),
  \qquad
  v_2(\theta) = \pi\big(R_\theta P_{17}\big) - \pi\big(R_\theta P_0\big).
\end{equation}

\begin{definition}[Projected palm winding]
\label{def:s}
\begin{equation}
  s(\theta) \;=\; \frac{v_1(\theta) \times v_2(\theta)}
                       {\lVert v_1(\theta) \rVert \, \lVert v_2(\theta) \rVert},
  \qquad
  a \times b \;=\; a_x b_y - a_y b_x .
\end{equation}
\end{definition}

The numerator is twice the signed area of the projected triangle
$P_0 P_5 P_{17}$; dividing by the edge lengths gives the sine of the angle
between them, so $s \in [-1, 1]$. Its sign is the winding order of that triangle
under projection.

\subsection{The crossing theorem}

\begin{theorem}[Exact crossing]
\label{thm:crossing}
Write the unrotated projected edges as $v_1(0) = (a, b)$ and $v_2(0) = (c, d)$,
and let $\kappa = ad - bc$. If $\kappa \neq 0$ then for all $\theta$
\begin{equation}
  s(\theta) \;=\; k(\theta)\cos\theta,
  \qquad
  k(\theta) \;=\; \frac{\kappa}{\lVert v_1(\theta) \rVert \, \lVert v_2(\theta) \rVert},
  \label{eq:factorisation}
\end{equation}
with $|k(\theta)| > 0$ for every $\theta$. Consequently
\begin{equation}
  \operatorname{sign} s(\theta) = \operatorname{sign}(\kappa)\operatorname{sign}(\cos\theta),
  \qquad\text{and}\qquad
  s(\theta) = 0 \iff \theta \equiv \tfrac{\pi}{2} \ (\mathrm{mod}\ \pi).
\end{equation}
\end{theorem}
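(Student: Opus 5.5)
The plan is to compute $s(\theta)$ explicitly from coordinates. The identity \eqref{eq:factorisation} should then drop out of a single line of algebra, and the sign and zero statements follow from it.

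\textbf{Step 1: reduce to two edge vectors.} Work with the edge vectors $E_1 = P_5 - P_0$ and $E_2 = P_{17} - P_0$ directly. Since $\pi$ and $R_\theta$ are both linear, $v_1(\theta) = \pi(R_\theta E_1)$ and $v_2(\theta) = \pi(R_\theta E_2)$. In the hand-fixed frame the three landmarks span the palm plane, and the $z$ axis is its normal. So $E_1$ and $E_2$ have zero $z$-component, and at $\theta=0$ they are exactly the given projections: $E_1 = (a,b,0)$ and $E_2 = (c,d,0)$.

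\textbf{Step 2: apply the rotation.} Rotation about the long ($y$) axis is
\[
R_\theta(x,y,z) = (x\cos\theta + z\sin\theta,\; y,\; -x\sin\theta + z\cos\theta).
\]
Hence
\[
v_1(\theta) = (a\cos\theta,\, b), \qquad v_2(\theta) = (c\cos\theta,\, d).
\]
The numerator of Definition~\ref{def:s} is therefore
\[
a d\cos\theta - b c\cos\theta = \kappa\cos\theta .
\]
Dividing by $\lVert v_1(\theta)\rVert\,\lVert v_2(\theta)\rVert$ gives \eqref{eq:factorisation} verbatim. Since $\kappa\neq 0$ and the norms are positive and finite wherever $s$ is defined, $|k(\theta)|>0$. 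It follows that
\[
\operatorname{sign} s = \operatorname{sign}\kappa\,\operatorname{sign}\cos\theta,
\]
and that $s$ vanishes exactly where $\cos\theta = 0$, that is, at $\theta\equiv\pi/2 \pmod{\pi}$.

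\textbf{Main obstacle: the denominator.} The squared norms are $a^2\cos^2\theta + b^2$ and $c^2\cos^2\theta + d^2$. At $\cos\theta = 0$ these reduce to $b^2$ and $d^2$. So $s$ is well defined at the edge-on instant only if $b\neq0$ and $d\neq0$, i.e.\ neither palm edge is perpendicular to the long axis.

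The hypothesis $\kappa\ne0$ does not by itself exclude this: it only rules out $b=d=0$. I would therefore add the harmless anatomical condition that both knuckles lie strictly distal to the wrist along $y$, so that $b,d\neq0$. I would note that this holds for any open hand with the $y$ axis chosen along its long axis. Alternatively, one could state the zero claim in the limiting sense, using the numerator $\kappa\cos\theta$ alone.

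The same condition guarantees $\lVert v_i(\theta)\rVert \ge \min(|b|,|d|) > 0$ uniformly in $\theta$. This makes $k$ continuous and bounded away from zero on all of $\mathbb{R}$, which is what the later sub-frame interpolation argument will want.
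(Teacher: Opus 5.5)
Your proposal is correct and follows the paper's proof essentially line for line. It uses the same rotation about the $y$ axis, the same projected edges $(a\cos\theta, b)$ and $(c\cos\theta, d)$, the same numerator $\kappa\cos\theta$, and the same reliance on $b, d \neq 0$, which the paper justifies by the knuckles being displaced from the wrist along the hand; you are right that this assumption is not implied by $\kappa\neq0$ alone and needs stating. One small correction: the lower bound $\min(|b|,|d|)$ on the norms only makes $k$ defined and continuous, while $|k|$ being bounded away from zero comes from $\kappa\neq0$ together with the automatic upper bound $\lVert v_1(\theta)\rVert\,\lVert v_2(\theta)\rVert \le \sqrt{(a^2+b^2)(c^2+d^2)}$.
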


\begin{proof}
Rotation about the $y$ axis maps $(x, y, z)$ to
$(x\cos\theta + z\sin\theta,\; y,\; -x\sin\theta + z\cos\theta)$. The three
landmarks lie in the palm plane $z = 0$, so after projection the $x$ components
are scaled by $\cos\theta$ and the $y$ components are unchanged:
$v_1(\theta) = (a\cos\theta,\, b)$ and $v_2(\theta) = (c\cos\theta,\, d)$. Hence
\begin{equation}
  v_1(\theta) \times v_2(\theta)
  = (a\cos\theta)d - b(c\cos\theta)
  = (ad - bc)\cos\theta
  = \kappa\cos\theta ,
\end{equation}
which gives \eqref{eq:factorisation} on dividing by the norms. Both norms are
bounded below by $\min(|b|, |d|) > 0$, since $b$ and $d$ are the components along
the axis of rotation and are unaffected by $\theta$; the knuckles are displaced
from the wrist along the hand, so neither vanishes. Therefore $|k(\theta)| > 0$.
Because $k$ never changes sign, the sign of $s$ is that of $\kappa\cos\theta$,
and $s$ vanishes exactly where $\cos\theta$ does.
\end{proof}

\begin{corollary}[Instant recovery]
A change of sign of $s$ observed between two sampled frames localises
$\theta = \pi/2$ to the interval between them. The estimate $\hat{t}^\star$ is
therefore accurate to the sampling interval of the estimator, independently of
any confidence the estimator reports.
\end{corollary}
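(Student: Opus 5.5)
The corollary should follow from Theorem~\ref{thm:crossing} together with the intermediate value property of $s$ as a function of time. Model the sampled frames as observing $\theta(t)$ at times $t_i$. Assume the rotation is continuous and monotone over the gesture, so that $\theta(t)$ increases or decreases through a single value $\pi/2$ (mod $\pi$) during the flip. With $\kappa \neq 0$, the theorem gives $\operatorname{sign} s(\theta(t)) = \operatorname{sign}(\kappa)\operatorname{sign}(\cos\theta(t))$. Since $\kappa$ is a fixed property of the hand geometry, it cannot change sign. Therefore a change of sign of $s$ between samples $t_i$ and $t_{i+1}$ forces a change of sign of $\cos\theta(t)$ on that interval.

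The key steps run as follows. First, observe that $t \mapsto \cos\theta(t)$ is continuous, so a sign change implies a zero $t^\star \in (t_i, t_{i+1})$. By the ``if and only if'' clause of the theorem, that zero is exactly a time where $\theta \equiv \pi/2 \pmod{\pi}$. Second, take $\hat{t}^\star$ to be either bracketing sample, or any point of the interval. Then $|\hat{t}^\star - t^\star| \le t_{i+1} - t_i$, which is the sampling interval of the estimator. Third, note that nothing in this argument used landmark confidence scores. The only input is the sign of $s$ at two samples, and by the factorisation that sign is determined by geometry alone, through $\operatorname{sign}(\kappa)$ and $\operatorname{sign}(\cos\theta)$. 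This gives the independence claim.

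The main obstacle is uniqueness: pinning down that the bracketed zero is \emph{the} instant $t^\star$ of the gesture rather than one of several crossings. If the hand turned through more than $\pi$ between two samples, or reversed direction within an interval, an even number of zeros would be invisible to the sign test, and an odd number would leave the localised zero ambiguous. I would handle this by stating explicitly the hypothesis that at most one crossing occurs per sampling interval, for example by bounding the angular speed by $\pi$ per interval. Under that hypothesis, the zero in $(t_i, t_{i+1})$ is unique and equals $t^\star$. A smaller obstacle is the degenerate case where a sample lands exactly on $s = 0$. There I would treat the closed interval and note that the bound $|\hat{t}^\star - t^\star| \le t_{i+1} - t_i$ still holds.
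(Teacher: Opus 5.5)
Your argument is correct and is essentially the paper's own. The paper gives no separate proof: it reads the corollary directly off the sign identity $\operatorname{sign} s(\theta) = \operatorname{sign}(\kappa)\operatorname{sign}(\cos\theta)$ of Theorem~\ref{thm:crossing}, and your appeal to continuity of $\theta(t)$, the intermediate value theorem and a one-crossing-per-interval hypothesis makes precise what the paper leaves implicit. Strictly, that last hypothesis is needed only to identify the bracketed zero with \emph{the} gesture instant, since every zero in the interval already lies within one sampling interval of any point of it.
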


This is the property the formulation was chosen for. The sampling interval, not
the classifier's certainty during the fastest part of the movement, bounds the
error.

\begin{figure}[t]
  \centering
  \includegraphics[width=\textwidth]{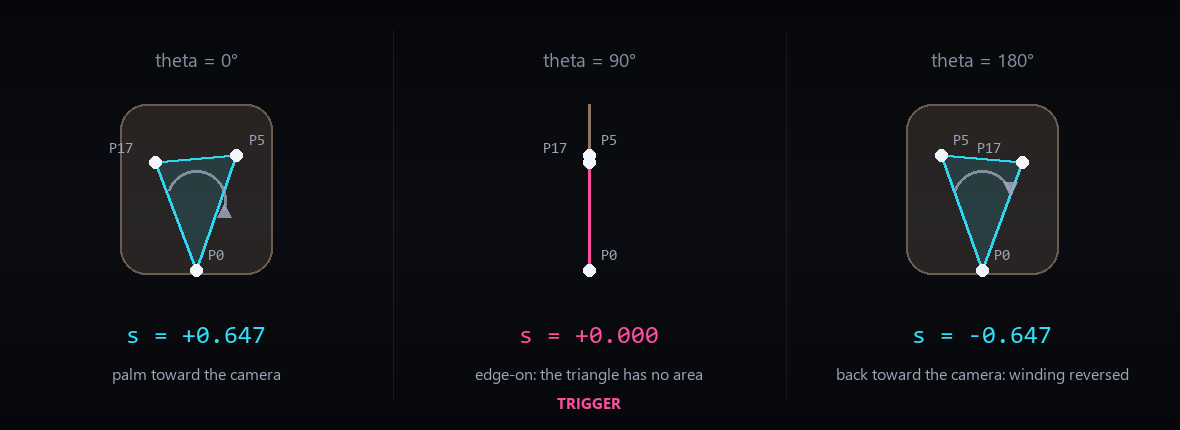}
  \caption{The projected palm triangle at three rotations, computed rather than
  drawn. Left: palm toward the camera, positive winding. Centre: edge-on, the
  projected triangle has no area and $s = 0$, which is the trigger. Right: back
  toward the camera, winding reversed. The printed values are those the
  implementation reads for the model hand used in the figure.}
  \label{fig:geometry}
\end{figure}

Figure~\ref{fig:geometry} shows the configuration at
$\theta \in \{0, \pi/2, \pi\}$. Note that $|s(0)| = 0.647 \neq 1$: the magnitude
depends on the angle subtended at the wrist and hence on the individual hand.
Only the sign and the zero are exact, and Theorem~\ref{thm:crossing} claims only
those.

\subsection{Invariance}
\label{sec:invariance}

The following hold for the criterion itself, and therefore for any landmark
estimator whose output is a projection of the hand.

\begin{proposition}[Mirror invariance]
\label{prop:mirror}
Let $M(x, y) = (-x, y)$ be reflection about the vertical axis, as applied to a
front-facing camera preview. Then $s \circ M = -s$, and the zero set is
unchanged.
\end{proposition}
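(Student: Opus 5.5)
The plan is to apply $M$ to the landmarks after projection and track its effect on the numerator and denominator of $s$ in Definition~\ref{def:s} separately. Because $M$ is linear, it commutes with taking differences. So if the mirrored image carries landmarks $M\pi(R_\theta P_i)$, the mirrored edges are $v_i' = M v_i$ for $i=1,2$. Explicitly, $v_1'(\theta) = (-v_{1,x}, v_{1,y})$, and likewise for $v_2'$.

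\textbf{Numerator.} Substituting into $a\times b = a_x b_y - a_y b_x$ gives
\[
v_1' \times v_2' = (-v_{1,x})v_{2,y} - v_{1,y}(-v_{2,x}) = -(v_1\times v_2).
\]
Equivalently, $\det M = -1$ and the cross product is the determinant of the $2\times 2$ matrix with columns $v_1, v_2$, so it picks up a factor of $\det M$.

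\textbf{Denominator.} $M$ is an isometry, so $\lVert M v_i\rVert = \lVert v_i\rVert$. Dividing the numerator identity by this unchanged product gives $s\circ M = -s$ pointwise in $\theta$.

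\textbf{Zero set.} $-s$ vanishes exactly where $s$ does, so the zero set is unchanged. Combined with Theorem~\ref{thm:crossing}, the mirrored quantity equals $(-\kappa)\cos\theta/(\lVert v_1\rVert\lVert v_2\rVert)$, which vanishes exactly at $\theta\equiv\pi/2 \pmod \pi$. Only the sign convention for which face is presented flips.

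There is no real obstacle. The one point needing care is whether the denominator is well defined, since $s$ must be defined on both sides of the identity. Theorem~\ref{thm:crossing} already shows that each norm is bounded below by $\min(|b|,|d|)>0$, and $M$ preserves this bound because it leaves $y$ components unchanged. A second point worth a sentence is that a zero \emph{crossing} (a sign change between frames) is preserved as well, not merely the zero set. This follows immediately because negation maps sign changes to sign changes, so a detector that looks for a change of sign in either direction fires on the same frames in the mirrored stream.
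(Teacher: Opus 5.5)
Your proof is correct and is essentially the paper's argument: linearity gives mirrored edges $Mv_1, Mv_2$, the cross product changes sign, the norms are preserved because $M$ is an isometry, and $-s$ has the same zero set as $s$. The only cosmetic difference is that the paper computes the sign change in the explicit form $v_1(\theta)=(a\cos\theta,b)$, $v_2(\theta)=(c\cos\theta,d)$, whereas you do it for arbitrary components via $\det M=-1$, which has the small advantage of not relying on the orthographic model; your remark that sign changes are preserved matches the paper's comment following the proof.
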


\begin{proof}
Reflection is linear, so the reflected edges are $Mv_1(\theta)$ and
$Mv_2(\theta)$. Using $v_1(\theta) = (a\cos\theta, b)$ and
$v_2(\theta) = (c\cos\theta, d)$ from the proof of Theorem~\ref{thm:crossing},
\begin{equation}
  Mv_1(\theta) \times Mv_2(\theta)
  = (-a\cos\theta)d - b(-c\cos\theta)
  = -(ad - bc)\cos\theta
  = -\,v_1(\theta) \times v_2(\theta),
\end{equation}
while $\lVert Mv \rVert = \lVert v \rVert$ since $M$ is an isometry. Hence
$s \mapsto -s$ for every $\theta$. The zero set of $-s$ equals that of $s$, so a
crossing at a given instant remains a crossing at that instant.
\end{proof}

Mirroring is thus a global sign convention. A detector keyed on a \emph{change}
of sign is unaffected, whereas one keyed on an absolute sign, such as a
palm-versus-back classifier, requires the convention to be tracked.

\begin{proposition}[Scale invariance]
\label{prop:scale}
For any $\lambda > 0$, $s$ is unchanged under $(v_1, v_2) \mapsto (\lambda v_1, \lambda v_2)$.
\end{proposition}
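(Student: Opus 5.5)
The plan is to argue directly from Definition~\ref{def:s}, using the fact that the two ingredients of $s$ scale with the same power of $\lambda$. No appeal to the rotation structure of Theorem~\ref{thm:crossing} is needed. The result will therefore hold for arbitrary edge vectors $v_1, v_2$ with nonzero norms, and in particular for $v_1(\theta), v_2(\theta)$ at every $\theta$.

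First, I will show that the cross product $a \times b = a_x b_y - a_y b_x$ is bilinear. It follows that
\begin{equation*}
  (\lambda v_1) \times (\lambda v_2) = \lambda^2 \,(v_1 \times v_2).
\end{equation*}

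Second, I will use the absolute homogeneity of the Euclidean norm, $\lVert \lambda v \rVert = |\lambda|\,\lVert v \rVert = \lambda \lVert v \rVert$ for $\lambda > 0$. This makes the denominator $\lambda^2 \lVert v_1 \rVert \lVert v_2 \rVert$. Dividing, the factor $\lambda^2$ cancels and $s$ is unchanged.

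The only point needing care is well-definedness: the denominator must stay nonzero. This is guaranteed, because the proof of Theorem~\ref{thm:crossing} already bounds both norms below by $\min(|b|,|d|) > 0$, and multiplying by $\lambda > 0$ preserves positivity.

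Two remarks close the argument. The restriction $\lambda > 0$ is what keeps the sign intact. For $\lambda < 0$ the numerator still gains $\lambda^2 > 0$ while the denominator gains $|\lambda|^2$, so $s$ would in fact also be invariant; I will note this in passing but not claim it. Finally, the same argument covers uniform scaling of the hand in 3D, which multiplies the points $P_0, P_5, P_{17}$ by $\lambda$. Since the map $\pi \circ R_\theta$ is linear, such a scaling multiplies the projected edges by $\lambda$. Hence the invariance also holds for hands of different physical size and for changes of camera distance under the orthographic model.
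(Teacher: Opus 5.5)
Your proof is correct and is the paper's own argument: bilinearity of the cross product puts $\lambda^2$ in the numerator, homogeneity of the norm puts $\lambda^2$ in the denominator, and the quotient is homogeneous of degree zero. One slip in a side remark: the restriction $\lambda>0$ is not what keeps the sign intact, and your next sentence correctly says so, since $(\lambda v_1)\times(\lambda v_2)=\lambda^2(v_1\times v_2)$ and $\lVert\lambda v\rVert=|\lambda|\,\lVert v\rVert$ leave $s$ unchanged for every $\lambda\neq 0$.
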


\begin{proof}
The numerator is bilinear and so scales by $\lambda^2$; the denominator is a
product of two norms and so scales by $\lambda^2$. The quotient is homogeneous of
degree zero.
\end{proof}

Hand size and distance from the lens act as such a $\lambda$ under orthographic
projection, so neither enters the criterion. No per-user calibration exists in
the implementation.

\begin{proposition}[Chirality]
\label{prop:chirality}
A left and a right hand differ in the sign of $\kappa$. The detection criterion
is unaffected.
\end{proposition}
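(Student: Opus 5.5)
The plan is to model handedness as a reflection of the hand across its own long axis, and then reuse the algebra from Theorem~\ref{thm:crossing} and Proposition~\ref{prop:mirror}. In the hand-fixed frame, a left hand is the mirror image of a right hand under $H(x,y,z) = (-x, y, z)$. The $y$ axis stays along the long axis, and the palm stays in the plane $z = 0$. So if the right hand's landmarks are $P_0, P_5, P_{17}$, the left hand's are $HP_0, HP_5, HP_{17}$.

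First I will check that the proof of Theorem~\ref{thm:crossing} still applies to the reflected landmarks. The only properties it uses are that the three points lie in $z = 0$ and that the $y$ components of the edges are nonzero, and $H$ preserves both. The unrotated projected edges of the left hand are therefore $(-a, b)$ and $(-c, d)$. Its invariant is then $\kappa' = (-a)d - b(-c) = -(ad - bc) = -\kappa$, which is the same computation as in Proposition~\ref{prop:mirror}. This proves the first claim: the two hands differ exactly in the sign of $\kappa$, and $\kappa' \neq 0$ whenever $\kappa \neq 0$.

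Second, I apply Theorem~\ref{thm:crossing} to each hand separately. This gives $s_L(\theta) = -\kappa\cos\theta/(\lVert v_1\rVert\lVert v_2\rVert)$, and the norms agree with the right hand's because $H$ is an isometry. Hence $s_L = -s_R$ for every $\theta$. In both cases the zero set is $\theta \equiv \pi/2 \pmod \pi$, and at each zero $s$ changes sign, since $\cos\theta$ has simple zeros and $k$ never vanishes. The detector fires on a change of sign, not on a particular sign, so it fires at the same instants for either hand. The subtlety is that the rotation $R_\theta$ is applied in the same sense to both hands, so the crossing angles coincide exactly and are not merely shifted copies; I will state this explicitly.

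The main obstacle is modelling, not algebra. I have to justify that a left hand really is $H$ applied to a right hand in the frame where Theorem~\ref{thm:crossing} holds, and that a physical left-hand flip is still a rotation about the long axis. I will argue this from bilateral symmetry of the skeleton, idealised as in the rest of the section. I will also note that the sign convention now depends on both handedness and image mirroring, through a factor $(-1)^{\text{mirror}}(-1)^{\text{chirality}}$. As with Proposition~\ref{prop:mirror}, only a palm-versus-back reading would need to track this factor; a detector keyed on sign changes does not.
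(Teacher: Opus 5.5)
Your proof is correct and is essentially the paper's argument: the two hands are related by a reflection, which negates $\kappa$ by the computation of Proposition~\ref{prop:mirror}, and Theorem~\ref{thm:crossing} then makes the zero set and the occurrence of a sign change independent of $\operatorname{sign}\kappa$. Your version adds explicit checks that the paper leaves implicit, namely that the reflection preserves the theorem's hypotheses and that $s_L = -s_R$; your remark about applying $R_\theta$ in the same sense to both hands is harmless but not needed, since the zero set $\theta \equiv \pi/2 \pmod{\pi}$ is symmetric under $\theta \mapsto -\theta$ (and the mirror image of a right-hand flip is in fact a left-hand flip in the opposite sense).
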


\begin{proof}
The two hands are related by reflection, which by the argument of
Proposition~\ref{prop:mirror} negates $\kappa$. By
Theorem~\ref{thm:crossing} the zero set of $s$ is independent of
$\operatorname{sign}\kappa$, and the detector is defined by a sign change of $s$,
which is likewise independent of it.
\end{proof}

\begin{proposition}[Degeneracy]
$\kappa = 0$ if and only if the wrist and the two knuckles are collinear in the
palm plane.
\end{proposition}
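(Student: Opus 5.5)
The plan is to reduce the claim to the elementary fact that a $2\times 2$ determinant vanishes exactly when its columns are linearly dependent. The setup of Theorem~\ref{thm:crossing} places the three landmarks in the palm plane $z=0$. Since $\pi$ restricted to that plane is the identity on the $(x,y)$ coordinates, $v_1(0) = (a,b)$ and $v_2(0)=(c,d)$ are the true in-plane edge vectors $P_5 - P_0$ and $P_{17}-P_0$, not foreshortened images of them. Hence $\kappa = ad-bc = v_1(0)\times v_2(0)$ is twice the signed area of the actual triangle $P_0P_5P_{17}$ in the palm plane.

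For the forward direction, I would assume $\kappa = 0$. Then the columns $(a,b)$ and $(c,d)$ are linearly dependent, so $P_{17}-P_0$ is a scalar multiple of $P_5-P_0$, or one of the two vectors is zero. In either case $P_0$, $P_5$ and $P_{17}$ lie on a common line in the palm plane.

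For the converse, I would assume the three points are collinear in the palm plane. Then both edge vectors lie along one direction $u$, so $v_1(0)=\alpha u$ and $v_2(0)=\beta u$. Bilinearity and antisymmetry of $\times$ then give $\kappa=\alpha\beta\,(u\times u)=0$.

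As a remark, I would add that the factorisation $v_1(\theta)\times v_2(\theta)=\kappa\cos\theta$ from the proof of Theorem~\ref{thm:crossing} shows the degeneracy is independent of viewpoint. When $\kappa=0$ the projected triangle has zero area at every $\theta$, so $s\equiv 0$ and carries no information, which is why the theorem needs the hypothesis $\kappa\neq 0$.

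The only point needing care is the identification of $(a,b),(c,d)$ with the in-plane edges at $\theta=0$. This holds precisely because the hand-fixed frame puts the palm in $z=0$ and $\pi$ discards only $z$. I would state this explicitly, since otherwise collinearity of the projections at $\theta = 0$ would not obviously be the same as collinearity in the palm plane. I would also note the edge case where a knuckle coincides with the wrist. It is excluded by the standing assumption $b,d\neq 0$, but even without that assumption it falls under "collinear", so it needs no separate case.
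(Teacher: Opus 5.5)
Your proof is correct and takes the same route as the paper, whose entire proof is that $\kappa$ is twice the signed area of the triangle $P_0P_5P_{17}$ and so vanishes exactly when the points are collinear. You spell out two steps the paper calls immediate: that $(a,b)$ and $(c,d)$ are the true in-plane edges at $\theta=0$, and that the $2\times 2$ determinant vanishes exactly when those edges are linearly dependent.
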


\begin{proof}
Immediate: $\kappa$ is twice the signed area of the triangle they span.
\end{proof}

This configuration does not occur in a hand, so the hypothesis of
Theorem~\ref{thm:crossing} is satisfied in practice. It is nonetheless the
correct failure mode to state, since a landmark estimator that collapses the
three points would produce $s \equiv 0$ and no crossing.

\subsection{From a crossing to a decision}

Theorem~\ref{thm:crossing} localises the instant but does not by itself
distinguish a deliberate flip from any other rotation. The implementation gates
the crossing with four conditions, each removing a failure observed during
development:

\begin{enumerate}[leftmargin=*, topsep=2pt, itemsep=1pt]
  \item $|s| \geq \tau_{\mathrm{arm}}$ sustained for $120$\,ms before the
        crossing, rejecting a hand that enters the field already rotating;
  \item at least three fingers extended, rejecting a rotating fist and a
        conversational wrist turn;
  \item the transit from $|s| \leq \tau_{\mathrm{cross}}$ to the opposite face
        completing within $[20, 600]$\,ms. The lower bound excludes a crossing
        that begins and ends inside one sample, which is what a single inverted
        landmark frame produces, and is far below the duration of a turn because
        it times the passage through edge-on rather than the turn: a hand that
        takes $300$\,ms to turn over is edge-on for some $40$\,ms of it, and a
        guard set at the duration of a turn rejects most real flips. The upper
        bound excludes a hand brought to edge-on and held there, not a hand
        turned over slowly, which passes through edge-on quickly whatever its
        overall pace and is the same gesture at a different speed;
  \item the opposite face reaching $|s| \geq \tau_{\mathrm{confirm}}$, rejecting
        a wobble toward edge-on that returns.
\end{enumerate}

Thresholds gate evidence quality; they do not define the gesture, which is
defined by the crossing. The asymmetry of the design is deliberate: a missed
gesture costs one repetition, whereas a spurious one corrupts a recording that
cannot be repeated, so every default is biased toward the former.

\subsection{Recovering the instant between samples}
\label{sec:interpolation}

Theorem~\ref{thm:crossing} places the gesture at the zero of $s$. A detector
observes $s$ only at the sampling instants, and the zero almost never coincides
with one, so reporting the sample at which $s$ was first observed inside a band
about zero places the gesture early by about half the time the signal spends
inside that band. At the rate this system tracks at, that error is tens of
milliseconds and is visible: the effect begins before the hand has finished
turning.

The zero is recoverable between samples. Let $t_a$ be the last sampling instant
carrying the sign presented before the rotation and $t_b$ the first carrying the
opposite sign. The reported instant is the linear interpolant,
\begin{equation}
  \label{eq:interpolate}
  \hat{t}^\star = t_a + (t_b - t_a)\,\frac{|s(t_a)|}{|s(t_a)| + |s(t_b)|}.
\end{equation}

\begin{proposition}[Second-order localisation]
\label{prop:interpolation}
If the rotation rate is constant over $[t_a, t_b]$, the error of
\eqref{eq:interpolate} is $O(\Delta t^2)$ in the sampling interval, whereas
reporting either bracketing sample is $O(\Delta t)$.
\end{proposition}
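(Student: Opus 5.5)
Under the constant-rate hypothesis, write $\theta(t) = \theta^\star + \omega(t - t^\star)$ with $\theta^\star \equiv \pi/2 \pmod \pi$. By Theorem~\ref{thm:crossing}, $s(t) = k(\theta(t))\cos\theta(t)$ with $k$ smooth and bounded away from zero. Near $t^\star$ we have $\cos\theta(t) = \mp\sin(\omega(t-t^\star))$, so
\[
s(t) = g(t)\,(t - t^\star),
\qquad
g(t^\star) = \mp\,\omega\, k(\theta^\star) \neq 0 ,
\]
with $g$ smooth. The plan is to expand $s$ about $t^\star$ rather than about a sample. This gives
\[
s(t) = s'(t^\star)(t-t^\star) + \tfrac12 s''(t^\star)(t-t^\star)^2 + O\bigl(|t-t^\star|^3\bigr),
\qquad
s'(t^\star) \neq 0 .
\]
Nonvanishing of $s'(t^\star)$ follows from $|k|>0$ in the theorem, together with $\omega \neq 0$.

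First, the baseline claim. Since $t_a < t^\star < t_b$, reporting $t_a$ or $t_b$ errs by $t^\star - t_a$ or $t_b - t^\star$. Each lies in $[0,\Delta t]$, and generically each is a fixed fraction of $\Delta t$. That is exactly $O(\Delta t)$, and not better in general.

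Second, the interpolant. Because $s(t_a)$ and $s(t_b)$ have opposite signs, \eqref{eq:interpolate} is the root of the secant line through $(t_a, s(t_a))$ and $(t_b, s(t_b))$. The standard secant/linear-interpolation error formula then gives
\[
\hat t^\star - t^\star \;=\; -\,\frac{s''(\xi)}{2\,s'(\eta)}\,(t^\star - t_a)(t^\star - t_b)
\]
for some $\xi,\eta \in [t_a,t_b]$. To derive this I will write the secant line $L$ and note that $s(t^\star) - L(t^\star) = \tfrac12 s''(\xi)(t^\star - t_a)(t^\star - t_b)$. Since $L(\hat t^\star)=0$ and $s(t^\star)=0$, the mean value theorem applied to $L$, whose slope is $s'(\eta)$, converts this into a time error.

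It remains to bound this. We have $|(t^\star - t_a)(t^\star - t_b)| \le \Delta t^2/4$. The quantity $s''$ is bounded because $s$ is smooth in $t$: the norms in $k$ are bounded below by $\min(|b|,|d|)$, as in the theorem's proof. The quantity $|s'(\eta)|$ is bounded below by roughly $|\omega k(\theta^\star)|/2$ for $\Delta t$ small, by continuity. Together these give $|\hat t^\star - t^\star| \le C\,\Delta t^2$, with $C = \sup|s''| / \bigl(2\inf|s'|\bigr)$ over the bracket.

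The main obstacle is this uniform lower bound on $|s'|$ over the bracket. It needs $\Delta t$ small relative to $1/|\omega|$, so that $\cos\theta$ stays in its near-linear regime, and it uses $|k| > 0$ essentially. I would state it as holding for $\Delta t$ below a threshold depending on $\omega$ and the hand geometry $(a,b,c,d)$. I would also remark that the second-order term is nonzero in general, since $k$ varies with $\theta$ and even $\cos\theta$ alone has curvature away from the zero. So $O(\Delta t^2)$ cannot be improved to exactness.
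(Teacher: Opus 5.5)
Your proof is correct and follows the paper's route: you use the factorisation to show that $s$ has a simple zero at $t^\star$ with slope $\mp\omega k(\theta^\star)\neq 0$, then bound the secant root's error by the second-order remainder, and the Lagrange interpolation-error formula together with your explicit smallness condition on $|\omega|\Delta t$ make rigorous what the paper asserts informally. Two slips do not affect the bound: first, the formula should read $\hat t^\star - t^\star = \frac{s''(\xi)}{2s'(\eta)}(t^\star-t_a)(t^\star-t_b)$ without the minus sign; second, your closing remark is wrong, because $k$ depends on $\theta$ only through $\cos^2\theta$, so $s$ is odd about $t^\star$ and $s''(t^\star)=0$, whence $|s''(\xi)|=O(\Delta t)$ on the bracket and your own formula gives an error of order $\Delta t^3$ (exactly zero for a symmetric bracket, nonzero otherwise, so still not exact).
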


\begin{proof}
By Theorem~\ref{thm:crossing}, $s(t) = k(\theta(t))\cos\theta(t)$ with $k$
smooth and non-vanishing. Writing $\theta(t) = \pi/2 + \omega(t - t^\star)$ for a
constant rate $\omega$ gives $\cos\theta(t) = -\sin(\omega(t - t^\star))$, whose
Taylor expansion about $t^\star$ is $-\omega(t - t^\star) + O((t-t^\star)^3)$.
Hence $s$ is linear in $t$ about the zero up to a cubic remainder, and $k$
contributes a smooth factor whose variation over an interval of length $\Delta t$
is $O(\Delta t)$ relative. Linear interpolation of a function that is linear to
that order recovers its zero with an error of the order of the neglected terms,
which is $O(\Delta t^2)$. Reporting $t_a$ or $t_b$ instead incurs the full
distance to the zero, which is $O(\Delta t)$.
\end{proof}

Section~\ref{sec:evaluation} measures this: the interpolant reduces the mean
absolute localisation error from $23.0$\,ms to $6.7$\,ms and removes a
$22.1$\,ms bias, at the cost of one division and two stored samples.

\subsection{Rate-decoupled compositing}
\label{sec:decoupled}

Let $f_\Lambda$ be the rate at which the estimator is evaluated and $f_d$ the
display rate, with $f_\Lambda \ll f_d$. Let a recognised gesture emit a trigger
$(\hat{t}^\star, e)$ carrying the estimated causal instant and an effect
identifier, and let the effect be a function $E(u)$ of normalised progress
$u \in [0, 1]$ over duration $D$. We render at $f_d$ with
\begin{equation}
  u(t) = \frac{t - t_0}{D},
  \qquad
  t_0 = \max\big(\hat{t}^\star,\; t_c - \Delta\big),
  \label{eq:timeline}
\end{equation}
where $t_c$ is the confirmation time and $\Delta$ a bound on backdating.

Two consequences follow. Effect playback is sampled at $f_d$ regardless of
$f_\Lambda$, so reducing the inference rate degrades recognition latency and not
smoothness. And because $t_0$ is anchored to the cause rather than to the
detection, the effect occupies the correct position in the encoded output even
though it was decided later. The clamp by $\Delta$ ensures the onset is not
skipped when confirmation is slow, at the cost of a bounded misalignment; we use
$\Delta = 120$\,ms.

The distinction is the one a distributed system draws between when an event
happened and when a process learned of it. Ordering by a clock read at the
observer orders by arrival, which is a property of the network rather than of the
events; Lamport's happened-before relation orders by cause instead, and the
physical algorithms that narrow the gap sit alongside the logical and vector
clocks that sidestep it in \citet{thakur2022clocksync}. Anchoring the timeline on
$\hat{t}^\star$ rather than on $t_c$ is the same choice made for the same reason:
the confirmation time is a property of the rate at which the hand is sampled, not
of the gesture.

\subsection{Retrospective compositing}
\label{sec:rewind}

Some desired effects require content absent from the current frame. Substituting
the subject requires synthesis. Substituting the \emph{moment}, however, requires
only memory.

We retain a circular buffer of $N$ frames at reduced resolution, captured at
$f_b \ll f_d$, holding $N / f_b$ seconds of history. On a trigger the compositor
selects
\begin{equation}
  F^\star = \arg\min_{i} \big| \tau_i - (t - D_r) \big|
\end{equation}
for a requested delay $D_r$, where $\tau_i$ is the capture time of buffered frame
$i$. Nearest-in-time rather than oldest is used so the effective delay remains
meaningful while the buffer is filling and does not jump when it wraps. With
$N = 24$, $f_b = 8$\,Hz and quarter resolution the cost is approximately
$5.5$\,MB of texture memory for three seconds of history.

\subsection{The two-hand frame}
\label{sec:frame}

A second interaction uses both hands. Four fingertips, two per hand, define a
quadrilateral used as a window onto a restyled version of the same scene. Corners
are held in anatomical order, so a corner corresponds to a fixed fingertip for
the lifetime of the gesture and smoothing requires no correspondence search;
crossing the hands yields a self-intersecting quadrilateral that recovers when
they uncross, since the ordering is stateless.

Landmark noise is the dominant difficulty: fingertip estimates move by a
noticeable fraction of the frame while the hands are still, and a window whose
boundary shimmers is unusable regardless of its contents. We smooth each corner
with a velocity-adaptive exponential filter in the manner of
\citet{casiez2012euro}, whose coefficient is rescaled to the elapsed interval,
\begin{equation}
  \alpha_{\Delta t} = 1 - (1 - \alpha_{60})^{\Delta t / \Delta t_{60}},
\end{equation}
so that the filter behaves identically at any inference rate, and we apply
hysteresis to both gates, since a hand rotating toward the camera foreshortens
and would otherwise cross a fixed threshold downward.

\subsection{Coverage of the window}
\label{sec:coverage}

The window is a region a fragment shader must fill, so the quadrilateral has to
become a predicate. The obvious construction, and the one we adopted first,
splits it into the triangle fan $F(Q) = T(c_0,c_1,c_2)\cup T(c_0,c_2,c_3)$ and
tests membership of either. That is sound only while the quadrilateral is
convex, and the anatomical ordering above guarantees that it is not always:
crossing the hands exchanges one hand's two corners and yields a
self-intersecting boundary. In that state the fan does not degrade gracefully.
It covers most of the bounding rectangle, which is visible as the stylised
region abruptly leaving the hands.

We use the even-odd rule instead: a point lies inside when a ray cast from it
meets the closed boundary an odd number of times \citep{haines1994point}. Write
$E(Q)$ for that region.

\begin{proposition}[Coverage]
\label{prop:coverage}
Let $Q$ be the closed polyline $c_0c_1c_2c_3c_0$ in general position.
\begin{enumerate}
  \item If $Q$ is simple and the diagonal $c_0c_2$ lies in its interior, then
        $F(Q) = E(Q)$.
  \item If the edges $c_0c_1$ and $c_2c_3$ meet at a point $x$ interior to both,
        then
        \[
          E(Q) \;=\; T(c_0,x,c_3)\,\cup\,T(c_1,c_2,x) \;\subseteq\; F(Q),
        \]
        and the inclusion is strict.
\end{enumerate}
\end{proposition}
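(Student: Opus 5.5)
For part 1, I would identify both regions with the closed polygonal region $P$ bounded by $Q$. Since $Q$ is simple, the Jordan curve theorem for polygons says the even-odd rule picks out exactly the bounded component of the complement, so $E(Q)=P$. The diagonal $c_0c_2$ lies in the interior of $P$ by hypothesis. It therefore cuts $P$ into two simple polygons with boundaries $c_0c_1c_2c_0$ and $c_0c_2c_3c_0$, which are the triangles $T(c_0,c_1,c_2)$ and $T(c_0,c_2,c_3)$. Their union is $P$, so $F(Q)=P=E(Q)$. Boundary points are handled either by taking closures throughout or by appeal to general position, under which the measure-zero boundary does not affect the equalities claimed.

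For part 2, I would first compute $E(Q)$. The crossing point $x$ splits $Q$ into two closed loops: $c_0xc_3c_0$, made of the pieces $c_0x$, $xc_3$ and the edge $c_3c_0$, and $xc_1c_2x$, made of $xc_1$, the edge $c_1c_2$, and $c_2x$. The even-odd parity of a point is additive, modulo 2, over a decomposition of the closed curve into closed sub-curves. A ray's crossings with $Q$ are exactly its crossings with the two loops, since the added vertex $x$ is a single point and is avoided in general position. Hence $E(Q)$ is the symmetric difference of the two triangles $T(c_0,x,c_3)$ and $T(c_1,c_2,x)$. These triangles meet only at $x$. I would check this by noting that the lines $c_0c_1$ and $c_2c_3$ through $x$ divide the plane into four angular sectors. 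In general position, where the two edges cross transversally, $c_0$ and $c_1$ lie on opposite rays of one line, and $c_3$ and $c_2$ on opposite rays of the other. So the two triangles occupy vertically opposite sectors, and their interiors are disjoint. The symmetric difference is then the union, up to the single point $x$.

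For the inclusion, I would show each bow-tie triangle lies in one fan triangle. Since $x\in c_0c_1$ and $x\in c_2c_3$, the convex combinations give $x\in T(c_0,c_1,c_2)$ and $x\in T(c_0,c_2,c_3)$. By convexity of triangles, $T(c_1,c_2,x)\subseteq T(c_0,c_1,c_2)$, because all three of its vertices lie there. Likewise $T(c_0,x,c_3)\subseteq T(c_0,c_2,c_3)$. This gives $E(Q)\subseteq F(Q)$.

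Strictness is where I expect the real work. I would compare areas. Writing $x=(1-\lambda)c_0+\lambda c_1$ with $0<\lambda<1$, the area of $T(c_1,c_2,x)$ is $(1-\lambda)$ times the area of $T(c_0,c_1,c_2)$. That is strictly less, because $T(c_0,c_1,c_2)$ is non-degenerate in general position. So some point of $T(c_0,c_1,c_2)$, for instance one near $c_0$ inside it, is not in $T(c_1,c_2,x)$. I must also check that this point is not in $T(c_0,x,c_3)$. Points near $c_0$ on the $c_2$ side of the line $c_0c_1$ fall in the sector opposite to $T(c_0,x,c_3)$'s side, so they avoid it. The main obstacle is making this sector argument clean, which means confirming orientations from the transversality of the crossing. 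The difference of areas then gives the overdraw quantity that Corollary~\ref{cor:coverage} presumably states.
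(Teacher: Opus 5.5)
Your proposal is correct and follows the paper's proof. Both split $Q$ at $x$ into the loops $c_0xc_3c_0$ and $xc_1c_2x$, both get containment because each lobe has its vertices in one convex fan triangle, and both get strictness from points of $T(c_0,c_1,c_2)$ on the $c_2$ side of the segment $c_0x$: the paper uses a one-sided neighbourhood of $c_0x$, and you find such points through the area ratio $1-\lambda$. The step you flag as the obstacle closes at once: since $x$ is interior to $c_2c_3$, the points $c_2$ and $c_3$ lie strictly on opposite sides of the line $c_0c_1$, so $T(c_0,c_1,c_2)\cap T(c_0,x,c_3)$ is contained in the segment $c_0x$.
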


\begin{proof}
For (i), an interior diagonal partitions a simple quadrilateral into exactly the
two triangles of the fan, and the even-odd region of a simple closed curve is its
interior. For (ii), cutting $Q$ at $x$ decomposes the boundary into two closed
simple loops, $c_0 \to x \to c_3 \to c_0$ and $x \to c_1 \to c_2 \to x$, meeting
only at $x$ and each traversed once. A ray from a point interior to either loop
meets the boundary an odd number of times and a ray from a point exterior to both
meets it an even number, which is the stated union. Containment in $F(Q)$ is
immediate because each lobe has its three vertices in one triangle of the fan,
and both triangles are convex. Strictness holds because $x$ is interior to
$c_0c_1$, so the triangle $T(c_0,c_1,c_2)$ of the fan contains a neighbourhood of
points on the far side of $c_0x$ from $c_3$; such points lie in no lobe, since
each lobe is bounded by $c_0x$ and $xc_1$ respectively. The corollary below
measures the excess in one symmetric case.
\end{proof}

\begin{corollary}
\label{cor:coverage}
For the crossed unit square $c_0=(0,1)$, $c_1=(1,0)$, $c_2=(1,1)$, $c_3=(0,0)$,
the edges $c_0c_1$ and $c_2c_3$ meet at $(\tfrac12,\tfrac12)$ and
\[
  |E(Q)| = \tfrac12, \qquad |F(Q)| = \tfrac34 .
\]
The fan therefore covers half as much area again as the window, and one third of
what it draws lies outside it.
\end{corollary}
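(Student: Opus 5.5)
The plan is to read Corollary~\ref{cor:coverage} as the concrete instance of case (ii) of Proposition~\ref{prop:coverage} and compute both areas directly. First I will check the hypothesis of case (ii). The edge $c_0c_1$ runs from $(0,1)$ to $(1,0)$ and lies on the line $x+y=1$. The edge $c_2c_3$ runs from $(1,1)$ to $(0,0)$ and lies on the line $y=x$. These lines meet at $x=(\tfrac12,\tfrac12)$, which is the midpoint of both segments and so interior to each. The remaining edges $c_1c_2$ (the segment $x=1$) and $c_3c_0$ (the segment $x=0$) do not meet, so the configuration is in general position. Case (ii) then gives $E(Q)=T(c_0,x,c_3)\cup T(c_1,c_2,x)$.

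Next I will compute $|E(Q)|$. The lobe $T((0,1),(\tfrac12,\tfrac12),(0,0))$ has base $1$ along $x=0$ and height $\tfrac12$, so its area is $\tfrac14$. The lobe $T((1,0),(1,1),(\tfrac12,\tfrac12))$ is its mirror image, also of area $\tfrac14$. The two lobes meet only at $x$, so their areas add, and $|E(Q)|=\tfrac12$.

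Then I will compute $|F(Q)|$. The fan triangles are $T_1=T(c_0,c_1,c_2)$, the upper-right half-square above $x+y=1$, and $T_2=T(c_0,c_2,c_3)$, the upper-left half-square above $y=x$. Each has area $\tfrac12$. Their overlap is the region above both lines, which is the triangle with vertices $(0,1)$, $(1,1)$, $(\tfrac12,\tfrac12)$, of area $\tfrac14$. By inclusion–exclusion, $|F(Q)|=\tfrac12+\tfrac12-\tfrac14=\tfrac34$. The ratios follow directly: $\tfrac34/\tfrac12=\tfrac32$, so the fan covers half as much area again as the window, and the fraction of the fan lying outside $E(Q)$ is $(\tfrac34-\tfrac12)/\tfrac34=\tfrac13$. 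This last step uses $E(Q)\subseteq F(Q)$ from Proposition~\ref{prop:coverage}.

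The main obstacle is identifying the overlap $T_1\cap T_2$ correctly, since the union is not disjoint and a careless sum would give $1$. I would settle it by describing both triangles as intersections of half-planes within the unit square: $T_1=\{x+y\ge 1\}$ and $T_2=\{y\ge x\}$. The intersection is then the region cut out by both lines, and its vertices can be read off directly.
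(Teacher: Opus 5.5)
Your proposal is correct and takes essentially the paper's route. The paper gives no separate proof and treats the corollary as a direct instance of case (ii) of Proposition~\ref{prop:coverage}. You do the same: identify the crossing at $(\tfrac12,\tfrac12)$, sum the two lobes of area $\tfrac14$ each, and get $|F(Q)|=\tfrac34$ by inclusion--exclusion over the two half-squares, whose overlap is the top triangle of area $\tfrac14$.
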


The predicate costs four half-open vertical comparisons and at most four
divisions. The fan it replaces costs a signed area for each of the five distinct
edges of its two triangles, ten multiplications, so correctness here is obtained
at a lower price than the failure it replaces.

\subsection{Stylising the interior}
\label{sec:stylise}

The window shows the same scene in another medium. We compute the three
representations of the white-box formulation \citep{thakur2021cartoonization}
analytically rather than learning them, which removes the model but keeps the
decomposition: a surface carrying flat interiors, a structure carrying quantised
colour, and a texture carrying contours.

\paragraph{Surface.}
An iterated bilateral filter \citep{tomasi1998bilateral}, in the manner
established for real-time abstraction by \citet{winnemoller2006abstraction},
\begin{equation}
  \label{eq:bilateral}
  I^{(m+1)}(x) = \frac{1}{Z(x)} \sum_{y \in \mathcal{N}(x)} I^{(m)}(y)\,
    \exp\!\Big(\!-\tfrac{\|y-x\|^2}{2\sigma_s^2}\Big)
    \exp\!\Big(\!-\tfrac{\|I^{(m)}(y)-I^{(m)}(x)\|^2}{2\sigma_r^{(m)2}}\Big),
\end{equation}
with $Z$ the sum of the weights. One pass suppresses noise; a short sequence
collapses each region onto its own colour while leaving boundaries in place, and
it is that flatness rather than any outline that makes the result read as drawn.

The range width $\sigma_r^{(m)}$ is not held constant across the sequence, and
the reason is quantitative rather than aesthetic.

\begin{proposition}[Range attenuation under sensor noise]
\label{prop:range}
Let two pixels of one surface differ only by independent sensor noise, each of
$C$ channels distributed $\mathcal{N}(0,\sigma_n^2)$. The expected range weight
in \eqref{eq:bilateral} is
\[
  \mathbb{E}\Big[\exp\!\Big(\!-\tfrac{\|\Delta I\|^2}{2\sigma_r^2}\Big)\Big]
  \;=\; \Big(1 + \tfrac{2\sigma_n^2}{\sigma_r^2}\Big)^{-C/2}.
\]
\end{proposition}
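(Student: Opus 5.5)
The plan is to reduce the expectation to a product of one-dimensional Gaussian integrals. First, fix the distribution of the difference. Write the two pixel values on the surface as $I(x)=\mu+n_x$ and $I(y)=\mu+n_y$, with $n_x,n_y$ independent and each channel $\mathcal{N}(0,\sigma_n^2)$. The common surface colour $\mu$ cancels, so $\Delta I = n_y - n_x$ has $C$ independent channels, each $\mathcal{N}(0,2\sigma_n^2)$, because the variance of a difference of independent variables is the sum of the variances. This factor of two is where the $2\sigma_n^2$ in the statement comes from, and it should be stated explicitly.

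Second, factorise. Since $\|\Delta I\|^2=\sum_{c=1}^C \Delta I_c^2$, the exponential splits as $\prod_c \exp(-\Delta I_c^2/2\sigma_r^2)$. By independence across channels, the expectation is the $C$-th power of the single-channel expectation $\mathbb{E}[\exp(-Z^2/2\sigma_r^2)]$ with $Z\sim\mathcal{N}(0,\tau^2)$ and $\tau^2=2\sigma_n^2$.

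Third, evaluate the single-channel term. It equals
\[
\frac{1}{\sqrt{2\pi}\,\tau}\int_{\mathbb{R}} \exp\!\Big(-\tfrac{z^2}{2}\big(\tfrac{1}{\tau^2}+\tfrac{1}{\sigma_r^2}\big)\Big)\,dz .
\]
Completing the Gaussian normalisation gives $\big(\tau^2(1/\tau^2+1/\sigma_r^2)\big)^{-1/2}=(1+\tau^2/\sigma_r^2)^{-1/2}$. Equivalently, this is the moment generating function of the $\chi^2_1$ variable $Z^2/\tau^2$ evaluated at $-\tau^2/(2\sigma_r^2)$. Substituting $\tau^2=2\sigma_n^2$ and raising to the power $C$ yields $(1+2\sigma_n^2/\sigma_r^2)^{-C/2}$.

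None of these steps is a real obstacle. The only point that needs care is the modelling assumption behind the first step: the noise must be independent across both pixels and channels, and the underlying surface value must be identical at the two pixels, so that $\Delta I$ is pure noise. The spatial weight and the normaliser $Z(x)$ do not enter, because the statement concerns only the range factor. I would note this so the result is not misread as a statement about the normalised filter output.
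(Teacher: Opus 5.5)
Your proof is correct and essentially matches the paper's. Both start from the same observation: $\Delta I$ has independent $\mathcal{N}(0,2\sigma_n^2)$ channels. Both then evaluate the expectation as a chi-squared moment generating function. The paper applies the $\chi^2_C$ generating function at $-\sigma_n^2/\sigma_r^2$ in one step, whereas you factorise over channels and compute the single-channel Gaussian integral (the $\chi^2_1$ case) explicitly, which is just that same fact unpacked.
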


\begin{proof}
$\Delta I$ has independent $\mathcal{N}(0,2\sigma_n^2)$ components, so
$\|\Delta I\|^2/(2\sigma_n^2) \sim \chi^2_C$. Writing $t=\sigma_n^2/\sigma_r^2$,
the expectation is the moment generating function of $\chi^2_C$ evaluated at
$-t$, which is $(1+2t)^{-C/2}$.
\end{proof}

For $C=3$ the consequence is direct. A filter with $\sigma_r=0.09$ retains a
weight of $0.74$ between two samples of a surface carrying $\sigma_n=0.03$, and
averages the noise away; against $\sigma_n=0.06$, which is an ordinary amount in
a dim room, the weight falls to $0.39$ and the filter defends each grain as
though it were an edge. Widening to $\sigma_r=0.24$ restores $0.84$. The schedule
is therefore coarse to fine, $\sigma_r = (0.24,\,0.14,\,0.09)$: the first pass
averages, and the passes after it restore the boundaries the first softened.

\paragraph{Structure.}
Tone is quantised in luminance with a controlled edge,
\begin{equation}
  \label{eq:quantise}
  Q(L) = q + \tfrac{\Delta q}{2}\tanh\!\Big(\varphi\,\tfrac{L-q}{\Delta q}\Big),
  \qquad q = \Delta q \operatorname{round}(L/\Delta q),
\end{equation}
and the colour is rescaled by $Q(L)/L$, which flattens the shading and holds the
hue. Rounding the three channels independently, which is what a posterisation
does, moves a colour toward a vertex of the unit cube and turns skin green.

Differentiating \eqref{eq:quantise},
\begin{equation}
  \label{eq:slope}
  Q'(L) = \tfrac{\varphi}{2}\operatorname{sech}^2\!\Big(\varphi\tfrac{L-q}{\Delta q}\Big)
  \;\le\; \tfrac{\varphi}{2},
\end{equation}
with equality at a band centre. The quantiser is thus an amplifier of whatever
variation reaches it, by a factor of $\varphi/2 = 3.5$ at the setting we use.
Variation the surface has reduced below visibility is returned above it, and a
large flat region whose tone sits near a band boundary flickers between two
levels. We therefore select the band from a five-tap mean of the surface
luminance with weights $(2,1,1,1,1)/6$, whose variance factor is
$\sum w_i^2 / (\sum w_i)^2 = 8/36 = 2/9$; the input standard deviation falls by
$\sqrt{2/9} = 0.471$ and the effective amplification with it, to $1.65$. Only the
choice of band is made from a neighbourhood: the colour is still read per pixel,
so no edge is softened.

\paragraph{Texture.}
Line work is a difference of Gaussians \citep{marr1980edge},
\begin{equation}
  \label{eq:dog}
  D_\sigma = G_\sigma * L - G_{k\sigma} * L, \qquad k = 1.6,
\end{equation}
thresholded in the manner of the extended operator of
\citet{winnemoller2012xdog}, though through a ramp rather than through that
operator's hyperbolic tangent. A tangent approaches its limits without reaching
them, so it returns a small nonzero coverage at zero difference and lays a faint
wash of ink over every flat region in the frame; in a style that adds its line to
a dark ground rather than mixing it in, the wash becomes the picture. A ramp is
exactly zero above its bias. The two kernels are given equal weight, which is
what makes that possible: both integrate to unity, so $D_\sigma$ vanishes
identically on any region of constant $L$, whatever that constant is.
The weighted form $G_\sigma - \tau G_{k\sigma}$ with $\tau<1$ instead leaves a
residue $(1-\tau)L$ proportional to brightness, and no single threshold then
draws the same line on a lit face and on a dark coat.

What the operator does respond to, besides contours, is noise.

\begin{proposition}[Noise response]
\label{prop:dognoise}
For per-pixel luminance noise of variance $\sigma_L^2$, independent across
pixels,
\[
  \operatorname{Var} D_\sigma \;=\; \sigma_L^2 \|G_\sigma - G_{k\sigma}\|_2^2
  \;=\; \frac{\sigma_L^2}{4\pi\sigma^2}\Big(1 - \frac{4}{1+k^2} + \frac{1}{k^2}\Big),
\]
which for $k = 1.6$ equals $0.0212\,\sigma_L^2/\sigma^2$.
\end{proposition}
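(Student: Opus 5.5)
We treat $D_\sigma$ as a linear filter applied to the noise field. Write $h = G_\sigma - G_{k\sigma}$, so that $D_\sigma = h * L$. Only the noise component $n$ of $L$ contributes to the variance: any deterministic part of $L$ shifts the mean of $D_\sigma$ but not its variance. With independent, zero-mean noise of variance $\sigma_L^2$ at each pixel, the variance of $\sum_y h(x-y)\,n(y)$ is $\sigma_L^2 \sum_y h(x-y)^2$. The Gaussians are wide compared with the pixel grid, so we replace this sum by the integral $\|h\|_2^2$ over $\mathbb{R}^2$. In that continuum idealisation the noise acts as white noise of unit-area intensity $\sigma_L^2$. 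This gives the first equality. We state the approximation openly, since the sum equals the integral only in the limit $\sigma \gg 1$ pixel.

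For the closed form, expand
\[
\|G_\sigma - G_{k\sigma}\|_2^2 = \|G_\sigma\|_2^2 - 2\langle G_\sigma, G_{k\sigma}\rangle + \|G_{k\sigma}\|_2^2 .
\]
Each term is an integral of a product of two normalised isotropic Gaussians in two dimensions, and one formula covers all three:
\[
\int G_a G_b \, dx = \frac{1}{2\pi(a^2+b^2)}.
\]
This follows because the product $G_a G_b$ is itself a Gaussian with combined variance $a^2b^2/(a^2+b^2)$. Equivalently, $\int G_a G_b$ is the value at the origin of the convolution $G_a * G_b = G_{\sqrt{a^2+b^2}}$.

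Substituting the three cases gives
\begin{itemize}
\item $\|G_\sigma\|_2^2 = 1/(4\pi\sigma^2)$;
\item $\langle G_\sigma, G_{k\sigma}\rangle = 1/(2\pi\sigma^2(1+k^2))$;
\item $\|G_{k\sigma}\|_2^2 = 1/(4\pi k^2\sigma^2)$.
\end{itemize}
Factoring out $1/(4\pi\sigma^2)$ leaves exactly $1 - 4/(1+k^2) + 1/k^2$.

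For the number, set $k=1.6$. Then $k^2 = 2.56$, $4/3.56 \approx 1.1236$ and $1/2.56 \approx 0.3906$. The bracket is therefore about $0.2670$, and dividing by $4\pi \approx 12.566$ gives $\approx 0.02125$, which rounds to $0.0212$ as stated.

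The main obstacle is not the algebra but the first equality: justifying the replacement of a discrete per-pixel sum by a continuum $L^2$ norm. We would handle it by stating the convention explicitly, that pixel-white noise is modelled as white noise of unit-area intensity $\sigma_L^2$. We would then note that the Riemann-sum error vanishes as $\sigma$ grows in pixel units. A secondary detail is to confirm that the kernels are normalised to unit integral, as the paper assumes earlier, since the cross-term formula depends on it.
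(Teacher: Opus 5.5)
Your proposal is correct and follows the paper's proof essentially step for step. The variance is scaled by $\|G_\sigma - G_{k\sigma}\|_2^2$, the norm is expanded into three Gaussian inner products each evaluated as $1/(2\pi(a^2+b^2))$ via the convolution at the origin, and $k=1.6$ gives $0.267/4\pi \approx 0.0212$. You also note that the first equality replaces the per-pixel sum by a continuum integral; the paper passes over this silently, and the resulting discretisation error is negligible at the paper's $\sigma = 1.4$ pixels.
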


\begin{proof}
Convolution with a fixed kernel scales an independent noise field's variance by
the squared $L^2$ norm of that kernel. Expanding,
$\|G_\sigma - G_{k\sigma}\|_2^2 = \|G_\sigma\|_2^2 - 2\langle G_\sigma,G_{k\sigma}\rangle + \|G_{k\sigma}\|_2^2$.
In two dimensions $\|G_a\|_2^2 = 1/(4\pi a^2)$ and
$\langle G_a, G_b\rangle = 1/(2\pi(a^2+b^2))$, since the integral of a product of
Gaussians is a Gaussian of the combined width evaluated at the origin.
Substituting $a=\sigma$, $b=k\sigma$ gives the stated form, and $k=1.6$ gives
$1 - 4/3.56 + 1/2.56 = 0.267$, hence the coefficient $0.267/4\pi = 0.0212$.
\end{proof}

The consequence settles a design choice. A camera frame with per-channel noise
$\sigma_n = 0.03$ carries luminance noise
$\sigma_L = \sigma_n\sqrt{0.2126^2+0.7152^2+0.0722^2} = 0.750\,\sigma_n = 0.0225$,
so at $\sigma = 1.4$ pixels Proposition~\ref{prop:dognoise} gives a response of
standard deviation $2.3\times10^{-3}$. The operator does not threshold at a point
but over a ramp: ink begins at $D_\sigma = -1.6\times10^{-3}$ and is laid in full
at $-4.8\times10^{-3}$, so the contrast at which a line is half drawn is
$\varepsilon = -3.2\times10^{-3}$. Grain therefore reaches two thirds of the
contrast the operator asks of a contour, and exceeds outright the contrast at
which it begins to draw, which is precisely the field of short marks we observed
across flat walls. We read it instead from
a mixture
\begin{equation}
  \label{eq:mixture}
  L = (1-\lambda)\,L_{\text{surface}} + \lambda\,L_{\text{raw}}, \qquad \lambda = 0.16,
\end{equation}
where $L_{\text{surface}}$ is the output of \eqref{eq:bilateral}. The surface
has already had the grain averaged out of it, so what reaches the operator is the
raw share of the noise and its response scales with $\lambda$. Requiring the
half-drawn contrast to stand at least eight standard deviations clear of that
response bounds $\lambda \leq 0.17$; we use $0.16$, which places it at $8.6$ and
leaves the fine detail the half-resolution surface cannot carry. The value is
fixed by the ratio the operator must achieve, not by inspection.

\subsection{Following the subject}
\label{sec:framing}

The gesture of Section~\ref{sec:frame} occupies both hands, so the device is
propped rather than held and is not aimed accurately. The output frame is
already a crop of a wider sensor, so the crop can follow a detected face
\citep{bazarevsky2019blazeface} at no cost beyond the detection. Two properties
are required of the update: that it not answer detection jitter, and that it not
jump when it stops ignoring it. Writing $p_t$ for the measured face centre and
$c_t$ for the framing,
\begin{equation}
  \label{eq:deadzone}
  c_{t+1} = c_t + \alpha\,\frac{\max\big(0, \|e_t\| - \delta\big)}{\|e_t\|}\,e_t,
  \qquad e_t = p_t - c_t .
\end{equation}

\begin{proposition}[Soft dead zone]
\label{prop:deadzone}
The map \eqref{eq:deadzone} is continuous in $e_t$; its fixed set is the closed
ball $\{c : \|p-c\| \le \delta\}$; and for a stationary target the excess over
the dead zone contracts geometrically,
\[
  \|e_{t+1}\| - \delta = (1-\alpha)\big(\|e_t\| - \delta\big).
\]
\end{proposition}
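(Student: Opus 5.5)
The whole argument runs through the scalar gain $g(e) = \alpha\max(0,\|e\|-\delta)/\|e\|$, so the update reads $c_{t+1} = c_t + g(e_t)\,e_t$. Written in terms of the error, with $p$ held fixed, this becomes $e_{t+1} = p - c_{t+1} = \big(1 - g(e_t)\big)\,e_t$. I assume $\delta>0$ and $0<\alpha\le 1$, the natural range for a smoothing coefficient. The third claim needs $\alpha\le1$ so that $1-\alpha\ge0$. Each of the three claims then follows from this one-dimensional description along the ray of $e_t$.

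\textbf{Continuity.} Away from $e=0$, $g$ is a composition of continuous functions: the norm, $\max(0,\cdot)$, and division by a nonzero quantity. So $g(e)\,e$ is continuous on $e\neq0$. The only point needing care is $e=0$, where the formula is formally $0/0$. On the open ball $\|e\|<\delta$ the numerator $\max(0,\|e\|-\delta)$ vanishes identically, so $g(e)\,e$ is $0$ on a whole neighbourhood of the origin. The update is therefore defined as zero there, and this matches the limit. The two pieces also agree on the sphere $\|e\|=\delta$: the inner formula gives $0$, and the outer formula gives $\max(0,0)=0$. So the map is continuous everywhere, and in particular it does not jump when the target leaves the dead zone.

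\textbf{Fixed set.} A point $c$ is fixed exactly when $g(e)\,e=0$, where $e=p-c$. This holds if and only if either $e=0$ or $\max(0,\|e\|-\delta)=0$. Both conditions amount to $\|p-c\|\le\delta$, since $\alpha>0$. This is the closed ball in the statement.

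\textbf{Geometric contraction.} The step multiplies $e_t$ by the nonnegative scalar $1-g(e_t)$, so it preserves direction. When $\|e_t\|>\delta$, we have $1-g = 1-\alpha+\alpha\delta/\|e_t\|$. Taking norms gives
\[
  \|e_{t+1}\| = (1-\alpha)\|e_t\| + \alpha\delta ,
\]
and subtracting $\delta$ from both sides yields the stated identity. This value lies in $[\delta,\|e_t\|)$, so the error stays outside the dead zone and the recursion continues. When $\|e_t\|\le\delta$ the excess is nonpositive and $e_{t+1}=e_t$. So the identity should be read for the excess measured outside the ball, as in the statement.

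The main obstacle is only bookkeeping at the origin and on the sphere $\|e\|=\delta$. I would handle it by defining the update piecewise before arguing continuity, rather than leaving the $0/0$ in the formula.
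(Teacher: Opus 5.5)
Your proof is correct and follows essentially the same route as the paper's: zero gain on the closed ball, a nonzero displacement of magnitude $\alpha(\|e_t\|-\delta)$ along $e_t$ outside it, and the recurrence obtained from $\|e_{t+1}\| = \|e_t\| - \alpha(\|e_t\|-\delta)$ by subtracting $\delta$. You are slightly more careful than the paper, which leaves implicit both the $0/0$ at $e=0$ and the assumption $\alpha\le 1$ that prevents the step from overshooting $p$ and makes that norm identity valid.
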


\begin{proof}
The coefficient $\max(0,\|e\|-\delta)/\|e\|$ tends to $0$ as $\|e\|\to\delta$
from either side, so the displacement is continuous across the boundary. Inside
the ball the coefficient is identically zero and every point is fixed; outside it
the displacement has magnitude $\alpha(\|e\|-\delta) > 0$, so no point there is
fixed. For $\|e_t\| > \delta$ the displacement
is parallel to $e_t$ with magnitude $\alpha(\|e_t\|-\delta)$, so
$\|e_{t+1}\| = \|e_t\| - \alpha(\|e_t\|-\delta)$; subtracting $\delta$ gives the
stated recurrence.
\end{proof}

A hard dead zone, which ignores motion below $\delta$ and follows it in full
above, is discontinuous at the boundary and produces a visible jump each time the
subject drifts across it. The form above never moves the framing by more than the
excess, so a subject who crosses the frame slowly is followed the whole way,
trailing by exactly $\delta$. As in Section~\ref{sec:frame}, $\alpha$ is rescaled
to the elapsed interval, so the behaviour does not depend on the frame rate.

\subsection{Delegated restyling}
\label{sec:delegated}

Section~\ref{sec:stylise} restyles the subject that is present. Replacing it
requires a generative model, and the constraint of zero marginal cost forbids
hosting one. The remaining option is to let the user supply their own key, which
moves the cost to them and keeps the deployment static. We implement that path
and state exactly what it requires.

The model is asked to redraw the whole take, not the interior of the window. The
window is then a reveal: the generated clip inside, the original outside. This is
the only order that can work, because the boundary moves every frame and a model
cannot be given a per-frame region to respect.

The correctness condition is a statement about the model's output. Let $R$ be the
raw take and let the returned clip be
\begin{equation}
  \label{eq:warp}
  G(x,t) = S\big(R(W_t(x)),\,t\big)
\end{equation}
for some appearance map $S$ and per-frame spatial warp $W_t$.

\begin{proposition}[Alignment]
\label{prop:alignment}
The composite that shows $G$ inside the tracked window and $R$ outside is
geometrically consistent if and only if $W_t = \mathrm{id}$ for every $t$.
\end{proposition}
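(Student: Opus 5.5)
The argument rests on making ``geometrically consistent'' precise. I take it to mean that for every frame $t$ and every point $x$ on the tracked window boundary $\partial\Omega_t$, the content shown immediately inside and immediately outside depicts the same scene point. A scene feature crossing the boundary must also continue across it without displacement. The window $\Omega_t$ is tracked on $R$, so its corners are fingertip positions in the coordinates of the raw take. Outside, the composite shows $R(x,t)$, the scene at $x$. Inside, by \eqref{eq:warp}, it shows $S(R(W_t(x)),t)$, the restyled appearance of the scene at $W_t(x)$. The appearance map $S$ changes colour but not position, so consistency is a statement about $W_t$ alone.

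For sufficiency, set $W_t=\mathrm{id}$. The inside then shows $S(R(x,t),t)$, which is a restyling of exactly the scene point depicted outside at the same $x$. Every edge crossing $\partial\Omega_t$ meets itself on both sides, and the hands that define the corners appear at the corners in both clips. This direction is immediate.

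For necessity, suppose $W_{t_0}(x_0)\neq x_0$ for some $t_0$ and $x_0$. Because the user moves the window freely, and the boundary ``moves every frame'' as the paper notes, I quantify consistency over all window placements a user could produce, not over one fixed trajectory. Under that reading some admissible window places $x_0$ on $\partial\Omega_{t_0}$. By continuity of $W_{t_0}$, the points just inside near $x_0$ show the scene near $W_{t_0}(x_0)$, while the points just outside show the scene near $x_0$. This gives a visible seam, since a scene feature through $x_0$ is displaced by $W_{t_0}(x_0)-x_0$ across the boundary. The one exception is a scene that is locally invariant under that displacement, such as a uniform wall. So a genericity assumption on $R$ is needed: distinct points have distinguishable neighbourhoods. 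With that assumption in place, the contrapositive follows.

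I expect the necessity step to be the main obstacle. The ``only if'' direction is false for a single fixed window on a degenerate scene, so the proof stands or falls on stating the quantifier over windows and the genericity of $R$ honestly. I would first try to phrase consistency as a requirement that holds for every window the tracker can emit. I would then note that the fingertips defining the corners are themselves scene content: if $W_t$ moves them, the window drawn on $R$ frames the wrong region of $G$, even on a featureless background. This gives a concrete witness that requires no extra genericity beyond the hands being visible. Finally, I would remark that $W_t$ is a property of the model's output. The condition can therefore be checked after the fact but not imposed, which is the point the paper makes about requesting rather than enforcing it.
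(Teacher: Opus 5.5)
Your argument is sound under the hypotheses you make explicit: continuity of $W_t$, genericity of $R$, and consistency demanded for every window the tracker could emit. It is organised differently from the paper's proof, though.

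The paper imposes consistency at every point \emph{inside} the window. What is shown at $x$ is the content of $R$ at $W_t(x)$, and this must be the content of $R$ at $x$, so $W_t(x)=x$ on the window. It then reaches the whole frame ``by continuity of the constraint across the boundary.'' Its step from equal content to equal position tacitly identifies content with location, and your genericity hypothesis states that step openly.

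You instead impose consistency only as the absence of a seam on $\partial\Omega_t$. You reach the whole frame by letting the window range over all admissible placements, so that every $x_0$ lies on some boundary.

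The paper's route is shorter and asks nothing of the scene. Yours makes the extension off the window honest. For a fixed window trajectory, $G$ is never displayed outside $\Omega_t$, so the composite cannot see $W_t$ there. Continuity alone does not exclude a warp that is the identity on the window and not elsewhere. Your quantifier over windows is exactly the hypothesis that step needs. Your observation that the ``only if'' fails for a fixed window on a degenerate scene also correctly locates where the statement is informal.

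One caveat concerns the fingertip witness. It detects only warps that displace the corner points, so it shows that $W_t(c_i)=c_i$ is necessary. It does not replace genericity for the full conclusion $W_t=\mathrm{id}$. A warp that fixes the hands and shuffles a featureless background stays invisible, as your own uniform-wall example shows.
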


\begin{proof}
The window is positioned from landmarks measured in $R$. A point $x$ inside it
displays $G(x,t) = S(R(W_t(x)),t)$, whose content is that of $R$ at $W_t(x)$.
Consistency requires the content shown at $x$ to be the content of $R$ at $x$,
that is $W_t(x) = x$ for all $x$ in the window and all $t$; extending to the
whole frame by continuity of the constraint across the boundary gives
$W_t=\mathrm{id}$.
\end{proof}

We cannot enforce Proposition~\ref{prop:alignment}; we can only request it. Every
prompt therefore carries a fixed constraint forbidding the reframing a generative
video model applies by default, and the style instruction is the only part the
user edits. A residual warp is reported as a limitation rather than corrected,
because correcting it means estimating $W_t$ from the pair, which is a
registration problem of the same order as the one avoided by asking the model not
to introduce it.

\section{Implementation}
\label{sec:implementation}

The system is a static web application. In its default configuration it
contacts no server, holds no key and has no per-use cost; the one path that
departs from this is Section~\ref{sec:delegated}, which is disabled until a user
supplies their own credential. Hand landmarks come from a WebAssembly build of an
on-device estimator \citep{zhang2020mediapipehands}, and face detection, when
framing is enabled, from a second on-device model \citep{bazarevsky2019blazeface}
built on the same runtime so the runtime is downloaded once. Compositing is
WebGL~2 \citep{khronos2022webgl2}; capture and encoding use the platform's own
interfaces \citep{w3c2025mediacapture, w3c2025mediarecording}.

Rendering is four passes rather than two. A first pass applies the persistent
colour treatment together with the aspect crop, the mirror and the framing offset
of Section~\ref{sec:framing}. A second evaluates \eqref{eq:bilateral} at half
resolution, ping-ponged between two targets so that no pass samples the texture
it writes, and runs only while the window is open. A third composites the window,
evaluating the style of Section~\ref{sec:stylise} for the fragments inside it and
no others. A fourth applies the transient effect. Separating the persistent
treatment from the transient one is what allows any of the six filters to combine
with any of the seven effects without a shader for each pairing.

The half resolution of the second pass is chosen for reach rather than for cost.
A kernel of fixed tap count spans twice as much of the full frame there, which is
what collapses skin and clothing into single regions instead of merely reducing
their grain; the softness introduced at a boundary is removed again by
\eqref{eq:quantise}, since quantising a ramp between two levels restores a step at
the crossing.

\begin{figure}[t]
  \centering
  \begin{tikzpicture}[
    node distance=6mm and 9mm,
    box/.style={draw=inkMuted, fill=panelFill, rounded corners=2pt,
                inner sep=4pt, font=\scriptsize, align=center, minimum height=8mm},
    hot/.style={box, draw=cyanAccent, very thick},
    sink/.style={box, draw=magentaAccent, very thick},
    arr/.style={-{Stealth[length=2mm]}, draw=inkMuted},
    feed/.style={arr, dashed},
  ]
    \node[box] (cam) {camera};
    \node[hot, right=of cam] (trk) {landmarks\\$f_\Lambda = 24$\,Hz};
    \node[hot, right=of trk] (det) {detector\\$s$ crossing};
    \node[box, right=of det] (tl) {timeline\\anchored to $\hat{t}^\star$};

    \node[box, below=8mm of trk] (filt) {pass 1\\filter};
    \node[box, below=8mm of tl] (fx) {pass 2\\effect};
    \node[sink, right=of fx] (canvas) {canvas};
    \node[sink, below=6mm of canvas] (file) {encoded file};
    \node[box, below=8mm of filt] (buf) {frame buffer\\$8$\,Hz};

    \draw[arr] (cam) -- (trk);
    \draw[arr] (trk) -- (det);
    \draw[arr] (det) -- (tl);
    \draw[arr] (tl) -- (fx);
    \draw[arr] (cam.south) |- (filt.west);
    \draw[arr] (filt) -- (fx);
    \draw[arr] (fx) -- (canvas);
    \draw[arr] (canvas) -- (file);
    \draw[feed] (filt) -- (buf);
    \draw[feed] (buf.east) -| (fx.south);
  \end{tikzpicture}
  \caption{The pipeline. Inference (cyan) is rate limited and feeds a timeline
  anchored to the causal instant; rendering runs at the display rate. The
  encoder reads the composited surface (magenta), so the effect is present in
  the output as pixels and no editing stage exists.}
  \label{fig:pipeline}
\end{figure}

The decision with the largest consequence is that the encoder captures the
composited surface rather than the camera stream (Figure~\ref{fig:pipeline}).
The alternative, recording the camera and reapplying effects afterwards, would
require storing trigger times, decoding, compositing offline and re-encoding, and
each stage is an opportunity for drift.

\begin{figure}[t]
  \centering
  \includegraphics[width=0.92\textwidth]{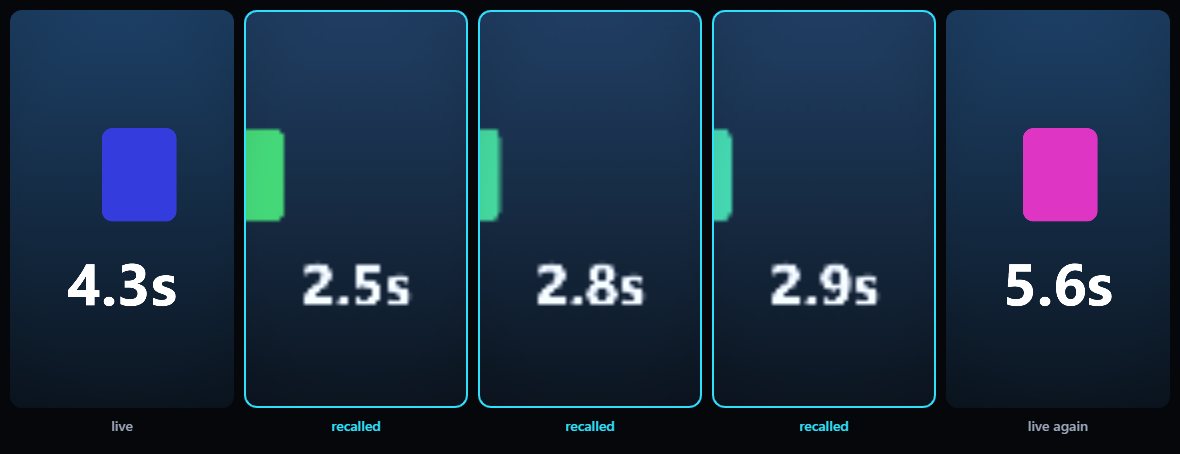}
  \caption{Verification of the temporal buffer. The synthetic subject renders the
  elapsed time at which each frame was drawn, and moves, so a recalled frame is
  distinguishable from a recoloured current one. The strip runs left to right:
  one live frame at $4.3$\,s, three recalled frames reading $2.5$, $2.8$ and
  $2.9$\,s, and one live frame after the effect at $5.6$\,s. The live clock at
  each recalled frame is not printed on it, but lies between those two bounds and
  advances with the strip, so each recalled frame is about the requested
  $2.2$\,s behind the moment it was shown.}
  \label{fig:rewind}
\end{figure}

\section{Evaluation}
\label{sec:evaluation}

We report what was measured and separate it from what was not, since the value of
a systems claim lies in that distinction.

\paragraph{Frame budget.}
At $60$\,Hz the budget is $16.7$\,ms. Landmark inference measured $6$--$11$\,ms
per evaluation on desktop integrated graphics, texture upload $1$--$2$\,ms, and
an effect shader $0.5$--$2$\,ms. Inference dominates, which is what motivates
Section~\ref{sec:decoupled}. Feature extraction and detector evaluation for two
hands and five detectors measured below $0.1$\,ms combined, consistent with the
criterion requiring one cross product per hand.

\paragraph{Window coverage.}
The predicate of Section~\ref{sec:coverage} was checked against the failure it
replaces. With one hand's corners exchanged, so that the boundary self-intersects,
the even-odd predicate renders two lobes and leaves the region between them
untouched; sampling on the axis through the crossing confirms the interior of each
lobe is filled and the two points where the shape pinches are not. The fan
predicate fills both, which is Corollary~\ref{cor:coverage} at work: for the
symmetric configuration it draws half as much area again as the window, a third of
it outside.

\paragraph{Abstraction.}
A flat field was given independent per-channel grain drawn uniformly on
$\pm 15$ of $255$, a standard deviation of $8.7$, and delivered to the renderer
through the same capture path a camera uses. The mean absolute difference between
horizontally adjacent output samples, measured well inside the window and clear
of its border, was $4.5$ of $255$ without the window and $0.2$ with it. The grain
is drawn from a seeded stream, so the two figures are reproducible rather than
merely reported. What the schedule of Section~\ref{sec:stylise} leaves is
therefore below the amplification of \eqref{eq:slope}, which is what
\eqref{eq:mixture} and the five-tap band selection were sized to achieve.

\paragraph{Separation of the styles.}
The seven media are intended to be nameable from a still. We rendered one frame
through each and measured the mean absolute channel difference over the interior
of the window for all twenty-one pairs. The widest, neon against ink, differed by
$122.7$ of $255$; the closest, cartoon against paint, by $7.6$, against a floor of
$6$ set before the measurement. Those two share a flattened surface and differ
mainly in saturation and in the direction of their strokes, so they are the pair
a reader should expect to find closest, and $7.6$ is a thinner margin than the
rest of the set enjoys. The measurement is reported, and its weakest pair named,
because ``they look different'' is otherwise an assertion by the author about
their own work.

\paragraph{Behavioural coverage.}
Four suites run against the shipped modules. Ninety assertions cover the
render passes, the coverage predicate under crossing, the framing clamp, the
noise reduction, the tracker's hysteresis and dropout behaviour, the spoken
command matcher, the credential store, the prompt constraint, the geometry track
and the exchange of Section~\ref{sec:delegated} against a stubbed transport.
Forty-two further assertions drive the application through its own document with
a synthetic camera and microphone, from the first-visit guide to a downloaded
file, and confirm among other things that audio is present in the encoded output
by decoding it rather than by trusting the container's declaration.

That distinction earned its cost. It caught a defect no weaker check could see:
the container was selected once at startup, before it was known whether a take
would carry sound, and the preferred string named a video codec alone. A recorder
given an explicit codec list encodes those codecs and no others, so an attached
audio track was accepted and then discarded without error. The track was present
in the stream, the file was well formed, its declared type named the container
correctly, and every recording was mute. Only decoding the output distinguishes
that state from a working one. The correction is to choose the container per
take, once the track set is known, rather than once at startup.

A third suite of twenty-two puts synthetic landmark poses through the
recording-cue detector at the rate the render loop actually calls it, which is
where a detector that treats a throttled frame as an absent hand is caught. A
fourth of ten stubs the speech interface and pushes transcripts through the
shipped matcher. The four suites total one hundred and sixty-four assertions.

\paragraph{Temporal buffer.}
The buffer was verified by rendering a subject that states its own draw time and
moves between draws (Figure~\ref{fig:rewind}). With a requested delay of
$2.2$\,s, the strip runs from a live frame at $4.3$\,s, through three recalled
frames reading $2.5$, $2.8$ and $2.9$\,s, to a live frame at $5.6$\,s, with the
subject visibly displaced between them. Each recalled frame therefore sits about
$2.2$\,s behind the live clock at the moment it was shown, which advances across
the strip. The buffer returns genuine earlier frames at approximately the
requested delay, the residual being the $125$\,ms capture granularity at
$f_b = 8$\,Hz.

\paragraph{Two-hand frame.}
The tracker was exercised with synthetic landmark configurations. It correctly
returned no window with no hands, opened to full presence on a framing
configuration, produced corners matching the constructed rectangle to two decimal
places, held the window through a $133$\,ms tracking dropout, closed after the
hands left, and rejected a configuration whose area fell below the opening gate.

\paragraph{End-to-end.}
The second suite covers acquisition, recording, review, export, output presets,
filters, effect rebinding, settings persistence, theme selection, focus
containment and keyboard dismissal, and passes against both the development
server and the deployed build. The distinction matters: the bundler moves
stylesheet links in the built page, and an override that wins by source order in
development can lose in the build.

\paragraph{Detector accuracy against known ground truth.}
The quantity that matters for a frame-synchronous detector is the error between
the instant the gesture occurred and the instant reported. That quantity cannot
be obtained from recorded video: an annotator watching a hand rotate can identify
the edge-on frame only to within the several frames over which the hand appears
edge-on, which is the same order as the error to be measured. Any figure derived
that way would report the annotator rather than the detector.

We therefore generate the corpus. A rigid twenty-one point hand is rotated about
its long axis through $\pi$ radians on a known schedule, projected
orthographically, and perturbed by independent Gaussian landmark noise of
standard deviation $0.004$ frame widths. The instant $\theta = \pi/2$ is then
known exactly. Sequences are drawn from a seeded stream, so the figures below
are reproducible rather than merely reported, and the evaluation runs the shipped
detector modules rather than a reimplementation of them
\citep{thakur2026gesturefx}.

Four near-miss classes are included, each sharing part of a flip's signature. A
detector that fires on everything scores perfectly on the flip class alone, so
the near misses are what make the positive result meaningful.

\begin{table}[t]
  \centering
  \caption{Classification over $60$ generated sequences per class, at
  $f_\Lambda = 24$\,Hz with the shipped thresholds. The two flip classes differ
  only in how quickly the hand is turned.}
  \label{tab:classification}
  \begin{tabular}{llrr}
    \toprule
    Class & Requirement & Fired & Correct \\
    \midrule
    Palm flip, quick, $180$--$520$\,ms      & must fire     & $54/60$ & $90.0\%$ \\
    Palm flip, deliberate, $0.9$--$1.6$\,s  & must fire     & $60/60$ & $100.0\%$ \\
    Held edge-on, $0.8$--$1.4$\,s           & must not fire & $0/60$  & $100.0\%$ \\
    Wobble to $54$--$79$ degrees            & must not fire & $0/60$  & $100.0\%$ \\
    Rotating closed hand                    & must not fire & $0/60$  & $100.0\%$ \\
    Entering frame mid-rotation             & must not fire & $0/60$  & $100.0\%$ \\
    \midrule
    \textbf{Total}                          &               & \multicolumn{2}{r}{$95.0\%$ detected, $0$ false positives in $240$ near misses} \\
    \bottomrule
  \end{tabular}
\end{table}

\begin{table}[t]
  \centering
  \caption{Localisation error on the $114$ detected flips, reported at the
  bracketing sample and after the interpolation of
  Section~\ref{sec:interpolation}. The sampling interval is $41.7$\,ms. Both
  columns are computed by the published evaluation.}
  \label{tab:localisation}
  \begin{tabular}{lrrrr}
    \toprule
    & \multicolumn{2}{c}{Reported at $t_a$} & \multicolumn{2}{c}{Interpolated} \\
    \cmidrule(lr){2-3}\cmidrule(lr){4-5}
    & ms & frames & ms & frames \\
    \midrule
    Mean absolute error   & $23.0$ & $0.55$ & $6.7$  & $0.16$ \\
    Median absolute error & $22.2$ & $0.53$ & $4.0$  & $0.10$ \\
    Ninetieth percentile  & $41.7$ & $1.00$ & $18.1$ & $0.43$ \\
    Worst case            & $66.3$ & $1.59$ & $26.7$ & $0.64$ \\
    Mean signed error     & $-22.1$ & ---   & $+0.8$ & ---    \\
    \bottomrule
  \end{tabular}
\end{table}

Three things in Tables~\ref{tab:classification} and~\ref{tab:localisation} are
worth stating plainly.

The detector localises the gesture to a sixth of the interval at which it
observes, and to a tenth of it in the median case, against the $0.55$ frames that
reporting the bracketing sample costs. That is the practical content of
Proposition~\ref{prop:interpolation}: a zero crossing carries information between
samples that a per-frame label does not. A classifier operating on the same
stream cannot report an instant finer than the frame it labels.

The mean signed error after interpolation is under one millisecond, so what
remains is scatter rather than bias, and its source is the landmark noise
entering through the two bracketing magnitudes.

The six missed flips are the fastest in the class. A hand turned over in
$180$\,ms is edge-on for roughly $25$\,ms, which is shorter than the sampling
interval, so no sample falls inside the band and no bracket exists. This is a
limit of the observation rate rather than of the criterion, and only a higher
tracking rate moves it.

\paragraph{An incidental finding.}
A canvas capture stream emits frames only while the canvas is \emph{painted}, and
a browser may cease painting a surface it considers not visible while still
reporting the document as visible. The recording then encodes nothing and the
failure surfaces only at the end. We detect the condition with a timer rather
than within the render loop, since the condition is the loop not running, and
abandon the recording after $2.5$\,s with an explanation. We record this because
it is a property of the platform that any comparable system will meet.

\section{Limitations}
\label{sec:limitations}

\textbf{Restyling is not substitution.} The shader restyles the subject present
in the frame. It cannot replace that subject, which would require synthesis of
content the stream never carried. Systems that do so route video to a hosted
generative model, reintroducing a server, a key and a per-use cost, and operate
either offline or at a latency set by the service.

\textbf{Orthography.} Theorem~\ref{thm:crossing} assumes orthographic
projection, and the factorisation \eqref{eq:factorisation} does not hold verbatim
under perspective. The qualitative conclusion does. Three points project to
collinear image points under a pinhole camera exactly when they are coplanar with
the centre of projection, so the projected triangle still degenerates once per
half turn and $s$ still changes sign there. The degeneracy occurs when the palm
plane contains the centre of projection rather than at $\theta = \pi/2$
precisely; the discrepancy is set by the angle the hand subtends from the optical
axis and vanishes as the hand approaches it. The crossing therefore remains
exact as an event, with a small bias in the associated angle, and only the
magnitude thresholds, which are gates rather than definitions, are affected.

\textbf{Rotation axis.} The analysis assumes rotation about the hand's long
axis. A rotation with a component about the optical axis leaves $s$ unchanged, so
it is correctly ignored; a rotation about the remaining axis is not modelled and
is excluded in practice by the finger-extension condition.

\textbf{Unverified surfaces.} No test was performed with a physical camera, with
real hands, or on Apple platforms. The gesture thresholds are derived rather than
fitted, but no threshold has met a human hand. Recording on iOS is the least
certain component: the interfaces are supported and four documented defects are
mitigated, but none of this was confirmed on a device.

\textbf{The corpus is generated, not filmed.} Table~\ref{tab:classification}
measures the criterion and its gating; it does not measure the landmark
estimator, which appears in the corpus as unbiased noise of constant variance and
is in reality neither. A hand also deforms as it turns, and a rigid model does
not. The figures are therefore a floor on the error attributable to the detector,
not a prediction of field performance, and the gap between the two is the
estimator's contribution. We consider this the honest form of the measurement
rather than a substitute for one: the alternative, annotating filmed rotations,
would report the annotator's uncertainty about the edge-on frame, which is of the
same order as the quantity being measured.

\paragraph{The delegated path is verified as far as the free tier reaches.}
Section~\ref{sec:delegated} is implemented against a preview interface whose
response shape has more than one documented form and whose model name has changed
between revisions, so its transport was exercised against the live service rather
than only against a stub. Four properties were confirmed. The endpoint accepts
the request the implementation constructs, returning an interaction bearing the
identifier and status field the client polls on, and reporting the terminal state
that ends the loop. The call succeeds from a page rather than from a server: the
service sends permissive cross-origin headers, so the browser-only architecture
of Section~\ref{sec:implementation} needs no proxy, and a proxy would have
reintroduced the server the design exists to avoid. A take produced by the
recorder, in the container of Section~\ref{sec:implementation}, is accepted and
decoded: the service billed $539$ video tokens for a four-second clip and
returned a description naming frame content that was present only in the video,
which establishes that frames survive capture, encoding and transport intact.
Finally, the media type must be sent without its codec parameters, and the
payload must be separated from the data URL header at the \texttt{;base64,}
marker rather than at the first comma, because a container that names two codecs
contains a comma of its own; both were found by this exercise and both are
defects the stub could not have exposed.

What remains open is the generation itself. The editing model carries a free-tier
quota of exactly zero and answers with a quota refusal rather than a result, so
the alignment of a real generation was not measured.
Proposition~\ref{prop:alignment} states the condition the composite requires;
whether a given model satisfies it is an empirical question that a metered key
would settle and this work does not, and the interface reports a mismatch rather
than concealing it.

\paragraph{Two paths are not local, and both are opt-in.}
Spoken command recognition uses the browser's own speech interface, which in the
two most common engines transmits audio to the vendor; the restyle transmits one
recording to a hosted model. Both are disabled by default, both state what they
do at the point of enabling, and both display an indicator while active. We
record this here rather than only in the interface because a paper that claimed a
wholly local system while shipping two exceptions would be making a false claim,
and because the design question of how such an exception should be surfaced is
not incidental to a privacy-preserving system.

\paragraph{A key held in a page is exposed to that page.}
The delegated path requires a credential in the browser, which no client-side
design can protect from the page that uses it. We reduce the surface by loading
no third-party script at runtime, by keeping the credential out of storage unless
asked, and by transmitting it in a header rather than a query string, but the
residual risk is real and is stated to the user in those terms.

\section{Conclusion}
\label{sec:conclusion}

Posing hand-gesture detection as synchronisation rather than classification
changes what counts as a solution. For rotation of an open hand about its long
axis, the change admits an exact answer: the projected palm winding factorises as
$k(\theta)\cos\theta$ with $k$ non-vanishing, so the gesture is a zero crossing of
one scalar and the crossing is the instant. The criterion needs no training data,
no calibration and one cross product per hand, and is provably invariant to
mirroring, to scale and to handedness.

Measured against a corpus with exact ground truth, the criterion detects
$95\%$ of flips with no false positive in $240$ near-miss sequences, and places
each one within $6.7$\,ms on average of the instant it occurred, which is a sixth
of the interval at which the hand is observed. That a detector can report an
event more finely than it samples is the practical consequence of treating the
gesture as a zero of a continuous quantity rather than as a label attached to a
frame, and it is the result we would ask a reader to take from this work.

The implementation, the derivation, the figure-generating code and the evaluation
itself are released under the MIT licence, and the evaluation runs in a browser
against the shipped modules, so every figure above can be reproduced by opening a
page rather than by rebuilding a toolchain \citep{thakur2026gesturefx}.

\section*{Availability}

Source code, documentation and the scripts that generate every figure in this
paper: \url{https://github.com/Amey-Thakur/GESTURE-FX}. A live deployment:
\url{https://amey-thakur.github.io/GESTURE-FX/}. Released under the MIT licence.

\bibliographystyle{unsrtnat}
\bibliography{references}

\begin{thebibliography}{29}
\providecommand{\natexlab}[1]{#1}
\providecommand{\url}[1]{\texttt{#1}}
\expandafter\ifx\csname urlstyle\endcsname\relax
  \providecommand{\doi}[1]{doi: #1}\else
  \providecommand{\doi}{doi: \begingroup \urlstyle{rm}\Url}\fi

\bibitem[Simon et~al.(2017)Simon, Joo, Matthews, and
  Sheikh]{simon2017handkeypoint}
Tomas Simon, Hanbyul Joo, Iain Matthews, and Yaser Sheikh.
\newblock Hand keypoint detection in single images using multiview
  bootstrapping.
\newblock \emph{arXiv preprint arXiv:1704.07809}, 2017.
\newblock \doi{10.48550/arXiv.1704.07809}.

\bibitem[Cao et~al.(2021)Cao, Hidalgo, Simon, Wei, and Sheikh]{cao2018openpose}
Zhe Cao, Gines Hidalgo, Tomas Simon, Shih-En Wei, and Yaser Sheikh.
\newblock {OpenPose}: Realtime multi-person {2D} pose estimation using part
  affinity fields.
\newblock \emph{IEEE Transactions on Pattern Analysis and Machine
  Intelligence}, 43\penalty0 (1):\penalty0 172--186, 2021.
\newblock \doi{10.1109/TPAMI.2019.2929257}.

\bibitem[Bazarevsky et~al.(2019)Bazarevsky, Kartynnik, Vakunov, Raveendran, and
  Grundmann]{bazarevsky2019blazeface}
Valentin Bazarevsky, Yury Kartynnik, Andrey Vakunov, Karthik Raveendran, and
  Matthias Grundmann.
\newblock {BlazeFace}: Sub-millisecond neural face detection on mobile {GPU}s.
\newblock \emph{arXiv preprint arXiv:1907.05047}, 2019.
\newblock \doi{10.48550/arXiv.1907.05047}.

\bibitem[Bazarevsky et~al.(2020)Bazarevsky, Grishchenko, Raveendran, Zhu,
  Zhang, and Grundmann]{bazarevsky2020blazepose}
Valentin Bazarevsky, Ivan Grishchenko, Karthik Raveendran, Tyler Zhu, Fan
  Zhang, and Matthias Grundmann.
\newblock {BlazePose}: On-device real-time body pose tracking.
\newblock \emph{arXiv preprint arXiv:2006.10204}, 2020.
\newblock \doi{10.48550/arXiv.2006.10204}.

\bibitem[Zhang et~al.(2020)Zhang, Bazarevsky, Vakunov, Tkachenka, Sung, Chang,
  and Grundmann]{zhang2020mediapipehands}
Fan Zhang, Valentin Bazarevsky, Andrey Vakunov, Andrei Tkachenka, George Sung,
  Chuo-Ling Chang, and Matthias Grundmann.
\newblock {MediaPipe Hands}: On-device real-time hand tracking.
\newblock \emph{arXiv preprint arXiv:2006.10214}, 2020.
\newblock \doi{10.48550/arXiv.2006.10214}.

\bibitem[Lugaresi et~al.(2019)Lugaresi, Tang, Nash, McClanahan, Uboweja, Hays,
  Zhang, Chang, Yong, Lee, Chang, Hua, Georg, and
  Grundmann]{lugaresi2019mediapipe}
Camillo Lugaresi, Jiuqiang Tang, Hadon Nash, Chris McClanahan, Esha Uboweja,
  Michael Hays, Fan Zhang, Chuo-Ling Chang, Ming~Guang Yong, Juhyun Lee,
  Wan-Teh Chang, Wei Hua, Manfred Georg, and Matthias Grundmann.
\newblock {MediaPipe}: A framework for building perception pipelines.
\newblock \emph{arXiv preprint arXiv:1906.08172}, 2019.
\newblock \doi{10.48550/arXiv.1906.08172}.

\bibitem[Bradski(2000)]{bradski2000opencv}
Gary Bradski.
\newblock The {OpenCV} library.
\newblock \emph{Dr. Dobb's Journal of Software Tools}, 25\penalty0
  (11):\penalty0 120--125, 2000.

\bibitem[Thakur et~al.(2022)Thakur, Satish, Kahlon, Rizvi, and
  Davare]{thakur2022quadtree}
Amey Thakur, Mega Satish, Randeep~Kaur Kahlon, Hasan Rizvi, and Ajay Davare.
\newblock {QuadTree} visualizer.
\newblock \emph{International Journal of Engineering Research and Technology
  (IJERT)}, 11\penalty0 (4), 2022.
\newblock \doi{10.5281/zenodo.18447415}.
\newblock Paper ID IJERTV11IS040156.

\bibitem[K{\"o}p{\"u}kl{\"u} et~al.(2019)K{\"o}p{\"u}kl{\"u}, Gunduz, Kose, and
  Rigoll]{kopuklu2019realtime}
Okan K{\"o}p{\"u}kl{\"u}, Ahmet Gunduz, Neslihan Kose, and Gerhard Rigoll.
\newblock Real-time hand gesture detection and classification using
  convolutional neural networks.
\newblock \emph{arXiv preprint arXiv:1901.10323}, 2019.
\newblock \doi{10.48550/arXiv.1901.10323}.
\newblock Published at IEEE FG 2019.

\bibitem[Thakur et~al.(2021{\natexlab{a}})Thakur, Dhiman, and
  Phansikar]{thakur2021neurofuzzy}
Amey Thakur, Karan Dhiman, and Mayuresh Phansikar.
\newblock Neuro-fuzzy: Artificial neural networks and fuzzy logic.
\newblock \emph{International Journal for Research in Applied Science and
  Engineering Technology (IJRASET)}, 9\penalty0 (9):\penalty0 128--135,
  2021{\natexlab{a}}.
\newblock \doi{10.22214/ijraset.2021.37930}.

\bibitem[Thakur and Konde(2021)]{thakur2021neuralnetworks}
Amey Thakur and Archit Konde.
\newblock Fundamentals of neural networks.
\newblock \emph{International Journal for Research in Applied Science and
  Engineering Technology (IJRASET)}, 9\penalty0 (8):\penalty0 407--426, 2021.
\newblock \doi{10.22214/ijraset.2021.37362}.

\bibitem[Lee and Kim(1999)]{lee1999threshold}
Hyeon-Kyu Lee and Jin~H. Kim.
\newblock An {HMM}-based threshold model approach for gesture recognition.
\newblock \emph{IEEE Transactions on Pattern Analysis and Machine
  Intelligence}, 21\penalty0 (10):\penalty0 961--973, 1999.
\newblock \doi{10.1109/34.799904}.

\bibitem[Shou et~al.(2016)Shou, Wang, and Chang]{shou2016temporal}
Zheng Shou, Dongang Wang, and Shih-Fu Chang.
\newblock Temporal action localization in untrimmed videos via multi-stage
  {CNN}s.
\newblock In \emph{Proceedings of the IEEE Conference on Computer Vision and
  Pattern Recognition (CVPR)}, pages 1049--1058, 2016.
\newblock \doi{10.1109/CVPR.2016.119}.

\bibitem[Farneb{\"a}ck(2003)]{farneback2003motion}
Gunnar Farneb{\"a}ck.
\newblock Two-frame motion estimation based on polynomial expansion.
\newblock In \emph{Image Analysis (SCIA 2003)}, volume 2749 of \emph{Lecture
  Notes in Computer Science}, pages 363--370. Springer, 2003.
\newblock \doi{10.1007/3-540-45103-X_50}.

\bibitem[Thakur and Talele(2026)]{thakur2026accident}
Amey Thakur and Sarvesh Talele.
\newblock A modular zero-shot pipeline for accident detection, localization,
  and classification in traffic surveillance video.
\newblock \emph{arXiv preprint arXiv:2604.09685}, 2026.
\newblock \doi{10.48550/arXiv.2604.09685}.

\bibitem[Casiez et~al.(2012)Casiez, Roussel, and Vogel]{casiez2012euro}
G{\'e}ry Casiez, Nicolas Roussel, and Daniel Vogel.
\newblock {One Euro} filter: A simple speed-based low-pass filter for noisy
  input in interactive systems.
\newblock In \emph{Proceedings of the SIGCHI Conference on Human Factors in
  Computing Systems (CHI '12)}, pages 2527--2530. ACM, 2012.
\newblock \doi{10.1145/2207676.2208639}.

\bibitem[Thakur and Satish(2021{\natexlab{a}})]{thakur2021gans}
Amey Thakur and Mega Satish.
\newblock Generative adversarial networks.
\newblock \emph{International Journal for Research in Applied Science and
  Engineering Technology (IJRASET)}, 9\penalty0 (8):\penalty0 2307--2325,
  2021{\natexlab{a}}.
\newblock \doi{10.22214/ijraset.2021.37723}.

\bibitem[Thakur et~al.(2021{\natexlab{b}})Thakur, Rizvi, and
  Satish]{thakur2021cartoonization}
Amey Thakur, Hasan Rizvi, and Mega Satish.
\newblock White-box cartoonization using an extended {GAN} framework.
\newblock \emph{International Journal of Engineering Applied Sciences and
  Technology (IJEAST)}, 5\penalty0 (12), 2021{\natexlab{b}}.
\newblock \doi{10.33564/IJEAST.2021.v05i12.049}.
\newblock Preprint: arXiv:2107.04551.

\bibitem[Thakur and Satish(2021{\natexlab{b}})]{thakur2021aoda}
Amey Thakur and Mega Satish.
\newblock Adversarial open domain adaption framework ({AODA}): Sketch-to-photo
  synthesis.
\newblock \emph{International Journal of Engineering Applied Sciences and
  Technology (IJEAST)}, 6\penalty0 (2), 2021{\natexlab{b}}.
\newblock \doi{10.33564/IJEAST.2021.v06i02.037}.
\newblock Preprint: arXiv:2108.04351.

\bibitem[Thakur and Satish(2022)]{thakur2022clocksync}
Amey Thakur and Mega Satish.
\newblock Clock synchronization in distributed systems.
\newblock \emph{International Research Journal of Engineering and Technology
  (IRJET)}, 9\penalty0 (3), 2022.
\newblock URL \url{https://www.irjet.net/archives/V9/i3/IRJET-V9I3350.pdf}.

\bibitem[Haines(1994)]{haines1994point}
Eric Haines.
\newblock Point in polygon strategies.
\newblock In Paul~S. Heckbert, editor, \emph{Graphics Gems IV}, pages 24--46.
  Academic Press, 1994.
\newblock \doi{10.1016/B978-0-12-336156-1.50013-6}.

\bibitem[Tomasi and Manduchi(1998)]{tomasi1998bilateral}
Carlo Tomasi and Roberto Manduchi.
\newblock Bilateral filtering for gray and color images.
\newblock In \emph{Proceedings of the Sixth International Conference on
  Computer Vision (ICCV)}, pages 839--846, 1998.
\newblock \doi{10.1109/ICCV.1998.710815}.

\bibitem[Winnem{\"o}ller et~al.(2006)Winnem{\"o}ller, Olsen, and
  Gooch]{winnemoller2006abstraction}
Holger Winnem{\"o}ller, Sven~C. Olsen, and Bruce Gooch.
\newblock Real-time video abstraction.
\newblock \emph{ACM Transactions on Graphics}, 25\penalty0 (3):\penalty0
  1221--1226, 2006.
\newblock \doi{10.1145/1141911.1142018}.

\bibitem[Marr and Hildreth(1980)]{marr1980edge}
David Marr and Ellen Hildreth.
\newblock Theory of edge detection.
\newblock \emph{Proceedings of the Royal Society of London. Series B,
  Biological Sciences}, 207\penalty0 (1167):\penalty0 187--217, 1980.
\newblock \doi{10.1098/rspb.1980.0020}.

\bibitem[Winnem{\"o}ller et~al.(2012)Winnem{\"o}ller, Kyprianidis, and
  Olsen]{winnemoller2012xdog}
Holger Winnem{\"o}ller, Jan~Eric Kyprianidis, and Sven~C. Olsen.
\newblock {XDoG}: An extended difference-of-gaussians compendium including
  advanced image stylization.
\newblock \emph{Computers \& Graphics}, 36\penalty0 (6):\penalty0 740--753,
  2012.
\newblock \doi{10.1016/j.cag.2012.03.004}.

\bibitem[{Khronos Group}(2022)]{khronos2022webgl2}
{Khronos Group}.
\newblock {WebGL} 2.0 specification.
\newblock Technical report, The Khronos Group, 2022.
\newblock URL \url{https://registry.khronos.org/webgl/specs/latest/2.0/}.

\bibitem[{World Wide Web Consortium}(2025{\natexlab{a}})]{w3c2025mediacapture}
{World Wide Web Consortium}.
\newblock Media capture and streams.
\newblock W3c recommendation, W3C, 2025{\natexlab{a}}.
\newblock URL \url{https://www.w3.org/TR/mediacapture-streams/}.

\bibitem[{World Wide Web
  Consortium}(2025{\natexlab{b}})]{w3c2025mediarecording}
{World Wide Web Consortium}.
\newblock {MediaStream} recording.
\newblock W3c working draft, W3C, 2025{\natexlab{b}}.
\newblock URL \url{https://www.w3.org/TR/mediastream-recording/}.

\bibitem[Thakur(2026)]{thakur2026gesturefx}
Amey Thakur.
\newblock {GESTURE-FX}: Gesture-triggered camera effects in the browser.
\newblock \url{https://github.com/Amey-Thakur/GESTURE-FX}, 2026.
\newblock Software, MIT License.

\end{thebibliography}

\end{document}